\definecolor{light-gray}{gray}{0.8}
\definecolor{light-red}{rgb}{1,0.5,0.5}
\newcommand*{\dom}{\textit{dom}}
\newcommand*{\sW}{{\bm W}}
\newcommand*{\sA}{{\bm D}}
\newcommand*{\sS}{{\bm S}}
\renewcommand*{\ss}{{\bm s}}
\newcommand*{\sX}{{\bm X}}
\newcommand*{\sZ}{{\bm Z}}
\newcommand*{\sO}{{\bm O}}
\newcommand*{\so}{{\bm o}}
\newcommand*{\sU}{{\bm U}}
\newcommand*{\cg}[1][]{(\sW#1, E #1)}
\newcommand*{\cm}[1][]{(\sW#1, E#1, P#1)}
\newcommand*{\envg}[1][]{(\sW#1, E#1, \sA#1, \sU#1)}
\newcommand*{\envm}[1][]{(\sW#1, E#1, \sA#1, \sU#1, P#1)}
\newcommand*{\envgsa}[1][]{(\sW#1, E#1, \{D\}, \sU#1)}
\newcommand*{\envmsa}[1][]{(\sW#1, E#1, \{D\}, \sU#1, P#1)}
\title{
  Understanding Agent Incentives using\\ Causal Influence Diagrams%
  \thanks{
    A number of people have been essential in preparing this paper.
    Ryan Carey,
    Eric Langlois,
    Michael Bowling,
    Tim Genewein,
    James Fox,
    Daniel Filan,
    Ray Jiang,
    Silvia Chiappa,
    Stuart Armstrong,
    Paul Christiano,
    Mayank Daswani,
    Ramana Kumar,
    Jonathan Uesato,
    Adria Garriga,
    Richard Ngo,
    Victoria Krakovna,
    Allan Dafoe,
    and
    Jan Leike
    have all contributed through thoughtful discussions and/or by reading drafts at
    various stages of this project.
  }
}
\author{Tom Everitt\hspace{-5mm} \and Pedro A.\ Ortega\hspace{-5mm} \and
  Elizabeth Barnes \and\hspace{-5mm} Shane Legg}
\date{September 9, 2019}
\begin{document}

\newgeometry{bottom=1.5cm,top=2cm}

\maketitle
\thispagestyle{empty}
\vspace{-5mm}

\begin{abstract}
  Agents are systems that optimize an objective function in an environment.
  Together, the goal and the environment induce secondary objectives, \emph{incentives}.
  Modeling the agent-environment interaction using
  \emph{causal influence diagrams},
  we can answer two fundamental questions about an agent's incentives
  directly from the graph:
  (1) which nodes can the agent have an incentivize to observe, and
  (2) which nodes can the agent have an incentivize to control?
  The answers tell us which information and influence points need extra protection.
  For example, we may want a classifier for job applications to not use the
  ethnicity of the candidate, and a reinforcement learning agent not to take
  direct control of its reward mechanism.
  Different algorithms and training paradigms can lead to different
  causal influence diagrams, so our method can be used to identify algorithms with
  problematic incentives and help in designing algorithms with
  better incentives.
\end{abstract}

\setcounter{tocdepth}{1}
\tableofcontents
\restoregeometry

\section{Introduction}
\label{ch:introduction}

Agents strive to optimize an objective function in an environment.
This gives them \emph{incentives}
to learn about and influence various aspects of the environment.
For example, a reinforcement learning agent
playing the ATARI game Pong will have an incentive to direct the ball to regions
where the opponent will be unable to intercept it,
and have an incentive to learn which those regions are.
The aim of this paper is to provide a simple and systematic method for inferring
agent incentives.
To this end, we define \emph{causal influence diagrams} (CID),
a graphical model with special decision, utility, and chance nodes
\citep{Howard1984},
where all arrows encode causal relationships \citep{Pearl2009}.
CIDs provide a flexible and precise tool for simultaneously describing both
agent objectives and agent-environment interaction.

To determine what information a system wants to obtain in order to
optimize its objective,
we establish a graphical criterion that characterizes which nodes in a CID graph
are compatible with an \emph{observation incentive}.
In words, the criterion is that:
\begin{quote}
  \textbf{Main result 1 (Observation incentives):}
  \emph{A single-decision CID graph is compatible with
    an observation incentive on a node $X$
    if and only if
    $X$ is d-connected to a influenceable utility node
    when conditioning on the decision and all available observations
    (\cref{th:observation-sa}).
  }
\end{quote}
The criterion applies to a conceptually clear definition of observation
incentive, which says that there is an incentive to observe a node if
learning its outcome strictly improves expected utility,
i.e.\ if the node provides a positive \emph{value of information}
\citep{Howard1966}.
Among other things, the criterion detects which observations are useful or
\emph{requisite} when making a decision.
Theorems establishing the \emph{only if} part of observation incentive criterion
have been previously established by
\citet{Fagiuoli1998,Lauritzen2001};
see \cref{sec:related-work} for a more detailed overview.
Here, we also prove the \emph{if} direction.

A related question is what aspects of its environment a system wants to
influence.
To answer this question, we establish an analogous graphical criterion
for \emph{intervention incentives}:
\begin{quote}
  \textbf{Main result 2 (Intervention incentives):}
  \emph{
    A single-decision CID graph is compatible
    with an intervention incentive on a non-decision node $X$ if and only if
    there is a directed path from $X$ to a utility node after all
    nonrequisite information links have been removed (\cref{th:soft-sa}).
}
\end{quote}
Intervention incentives detect a positive
\emph{value of control} \citep{Matheson2005,Shachter2010,Heckerman1995}
or \emph{value of intervention} \citep{Lu2002}.
No graphical criterion of intervention incentives has
previously been established.
Depending on the path from $X$ to the utility node, we can make a further
distinction between whether the intervention on $X$ is used to obtain more
information or to directly control a utility variable.

We demonstrate two applications of our theorems.
The observation incentive criterion provides insights about the \emph{fairness}
of decisions made by machine learning systems and other
agents \citep{ONeil2016},
as it informs us when a variable is likely to be used as a proxy for
a sensitive attribute or not (\cref{sec:fairness-example}).
With the intervention incentive criterion, we study the incentive
of a question-answering system (QA-system)
to influence the world state with its answer,
rather than passively predicting future events
 (\cref{sec:oracle-example}).
Many more applications of CIDs are provided by
\citet{Everitt2019modeling,Everitt2019tampering}.

\paragraph{Outline}
Following an initial background section 
(\cref{sec:background}), we devote
one section to observation incentives (\cref{sec:observation-incentives}) and
one section to intervention incentives (\cref{sec:intervention-incentives}).
These sections contain formal sections defining the criteria,
as well as ``gentler'' sections describing how to use and interpret the criteria.
Both sections conclude with an example application: to fairness for
observation incentives, and to QA-system for intervention incentives.
Finally, we discuss related work (\cref{sec:related-work})
and some open questions (\cref{sec:limitations}),
before stating some conclusions in \cref{sec:conclusions}.
All proofs are deferred to \cref{app:proofs}.

\section{Background}
\label{sec:background}

This section provides the necessary background and notation for the rest of the paper.
A recap of causal graphs (\cref{sec:bayesian-networks}) and d-separation
(\cref{sec:d-separation}) is followed by a definition of CIDs
(\cref{sec:influence-diagrams-sa}).

\subsection{Causal Graphs}
\label{sec:bayesian-networks}

\paragraph{Random Variables}
A random variable is a (measurable) function
$X\colon \Omega \to \dom(X)$ from some measurable space $(\Omega, \Sigma)$ 
to a finite domain $\dom(X)$.
The domain $\dom(X)$ specifies which values the random variable can take.
The outcome of a random variable $X$ is $x$.

A set or vector $\sX = (X_1, \dots, X_n)$ of random variables is again a random
variable, with domain $\dom(\sX) = \prod_{i=1}^n \dom(X_i)$.
We will use boldface font for sets of random variables (e.g.\ $\sX$).

\paragraph{Graphs and models}
Throughout the paper we will make a distinction between \emph{graphs}
on the one hand, and \emph{models} on the other.
A graph only specifies the structure of the interaction,
while a model combines a graph with a \emph{parameterization} to also define
the relationships between the variables.

\begin{definition}[Causal graph; \citealp{Pearl2009}]
  A \emph{causal graph}
  is a directed acyclic graph $\cg$ over a set
  of nodes or random variables $\sW$, connected by edges $E\subseteq \sW\times
  \sW$.
  The arrows indicate the direction of causality, in the sense that
  an external intervention on a node $X$ will affect the descendants
  of $X$, but not the ancestors of $X$.
  We denote the parents of $X$ with $\Pa_X$.
  Following the conventions for random variables, the outcomes of the parent
  nodes are denoted $\pa_X$.
\end{definition}

\begin{figure}
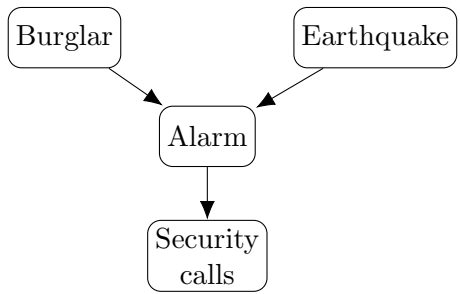

  \centering
  \begin{influence-diagram}
    \node (alarm) {Alarm};
    \node (burglar) [above left = of alarm] {Burglar};
    \node (earth) [above right = of alarm] {Earthquake};
    \node (call) [below = of alarm] {Security\\ calls};

    \path
    (burglar) edge[->] (alarm)
    (earth) edge[->] (alarm)
    (alarm) edge[->] (call)
    ;
  \end{influence-diagram}
  \caption{
    An example of a causal graph \citep{Pearl2009}.
  }
  \label{fig:causal-graph}
\end{figure}
For example, in \cref{fig:causal-graph}, an alarm is influenced
by the presence of a burglar and by a (small) earthquake, and in turn
influences whether the security company calls.
The graph defines the structure of the interaction,
but does not specify the relationship between the variables.
As in a Bayesian network, the precise relationships are specified
by conditional probability distributions $P(x\mid \pa_X)$.

\begin{definition}[Causal model; \citealp{Pearl2009}]
  A \emph{causal model}
  $\cm$ is a causal graph $\cg$ combined with a \emph{parameterization} $P$
  that specifies a finite domain $\dom(X)$ and a
  conditional probability distributions $P(x\mid \pa_X)$
  for each node $X\in \sW$.
\end{definition}

A parameterization $P$ induces a joint distribution
$P(x_1,\dots,x_n) = \prod_{i=1}^n P(x_i\mid \pa_i)$ over all the nodes
$\{X_1,\dots, X_n\} =\sW$.

\subsection{d-Separation}
\label{sec:d-separation}

\begin{definition}[Graph terminology]
  A \emph{path} is a chain of non-repeating nodes connected by edges in the graph. We
  write $X\pathto Y$ for a \emph{directed path} from $X$ to $Y$, and $X\upathto Y$ for
  an \emph{undirected path}. The \emph{length} of a path is the number of edges
  on the path.
  We do allow paths of length 0.
  
  If there is a directed path $X\pathto Y$ of length at least 1,
  then $X$ is an \emph{ancestor} of
  $Y$, and $Y$ is a \emph{descendant} of $X$.
  Let $\desc(X)$ be the set of descendants of $X$.
\end{definition}

An important question is when the outcome of one variable $Y$ provides
information about the outcome of another variable $X$.
This depends, of course, on which other outcomes $\sZ$ that we already know.
If $Y$ provides no additional information about $X$ given that we already
observe $\sZ$, then we say that $X$ and $Y$ are \emph{conditionally independent}
when conditioning on $\sZ$.
Formally, $P(X\mid Y, \sZ) = P(X\mid \sZ)$.
It is possible to tell whether $X$ and $Y$ must be conditionally
independent given $\sZ$ in a causal graph.
The criteria for determining this is called \emph{d-separation}:

\label{sec:d-sep}
\begin{definition}[d-separation; {\citealp{Pearl2009}}]
  \label{def:d-sep}
  \label{def:d-connected}
  An undirected path $X\upathto Y$ in a causal graph is \emph{active}
  conditioning on a set $\sZ$
  if each three node segment of the path subscribes to one of the following
  \emph{active} patterns:
  \begin{itemize}
  \item Chain:  $X_{i-1}\to X_i\to X_{i+1}$ or $X_{i-1}\gets X_i\gets X_{i+1}$ and $X_i\not\in \sZ$.
  \item Fork:  $X_{i-1}\gets X_i\to X_{i+1}$ and $X_i\not\in \sZ$.
  \item Collider:  $X_{i-1}\to X_i\gets X_{i+1}$
    and some descendant of $X_i$ is in $\sZ$.
  \end{itemize}

  Two nodes $X$ and $Y$ are \emph{d-connected} by (conditioning on) a set $\sZ$ of
  nodes if there is an undirected path between $X$ and $Y$ that is active when
  conditioning on $\sZ$; otherwise $X$ and $Y$ are
  \emph{d-separated} by (conditioning on) $\sZ$.
  The notation $X\dsep Y\mid \sZ$ denotes d-separation 
  and $X\not\dsep Y\mid \sZ$ denotes d-connection.
  Note that paths of length 0 and 1 are always active, so a node is always d-connected
  to itself and to its parents and children.
\end{definition}

It has been shown that if $X$ and $Y$ are d-separated by $\sZ$,
then they are conditionally independent given $\sZ$
in any parameterization $P$ of the
graph \citep{Verma1988soundness}.
Conversely, if they are d-connected, then there is some parameterization $P$
in which they are conditionally
dependent given $\sZ$ \citep{Geiger1990completeness,Meek1995}.

\subsection{Causal Influence Diagrams}
\label{sec:influence-diagrams-sa}

\emph{Influence diagrams} are graphical models with special
decision and utility nodes, developed to model decision-making problems
\citep{Howard1984,Koller2003}.
This makes them good models for situations where an agent is trying to optimize
an objective in an environment.%
\footnote{In \posscite{Dennett1987} terminology,
  causal graphs can represent a \emph{physical stance}, while
  influence diagrams can be used to represent an \emph{intentional stance}.}
See \cref{fig:id-example} for an example.
We will use the term \emph{causal influence diagram} (CID) for influence diagrams
where all arrows encode causal relationships.%
\footnote{%
  In the influence diagram literature, a weaker causality condition applying only
  to descendants of decisions is often used \citep{Shachter2010,Heckerman1995}.  
}

As with causal graphs, we begin by defining the graph that
specifies only the structure of the interaction.

\begin{definition}[CID graph]
  \label{def:envg}
  A \emph{CID graph} is a tuple $G=\envg$, with
  \begin{itemize}
  \item $\cg$ a causal graph
  \item $\sA\subseteq \sW$
    an ordered set of \emph{decision nodes},
    represented by blue rectangles
    \begin{tikzpicture} 
      \node [draw, decision, minimum size=\ucht, inner sep=0mm] {};
    \end{tikzpicture}
  \item $\sU\subseteq \sW\setminus\sA$ a set of \emph{utility nodes},
    represented by yellow
    octagons
    \begin{tikzpicture}
      \node [draw, utilityc=1.5pt, minimum size=\ucht] {};
    \end{tikzpicture}.
  \item The remaining nodes $\sW\setminus(\sA\cup\sU)$ are called \emph{chance
      nodes}, and are represented with white
    circles
    \begin{tikzpicture} 
      \node [draw, circle,minimum size=\ucht, inner sep=0mm] {};
    \end{tikzpicture}
    or
    rectangles with rounded corners
        \begin{tikzpicture} 
      \node [draw, rectangle, rounded corners=2] {};
    \end{tikzpicture}.
\end{itemize}
  The parents $\Pa_D$ of a decision node $D\in\sA$ represent the \emph{decision context}
  for $D$, i.e.\ what information is available when $D$ is chosen.
  Information links $\Pa_D\to D$ are represented with dotted edges.
\end{definition}

\begin{figure}
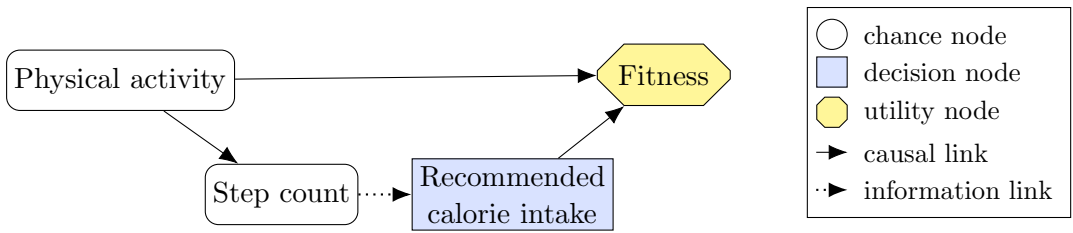

  \centering
  \begin{influence-diagram}

    \node (O) [] {Step count};
    \node (A) [decision, right = of O] {Recommended\\ calorie intake};

    \node (hA) [above = of A, draw=none] {};
    \node (Y) [right = of hA, utilityc={5pt}] {Fitness};

    \node (hS) [left = of O, draw=none] {};
    \node (S) [above = of hS] {Physical activity};

    \path
    (S) edge[->] (Y)
    (S) edge[->] (O)
    (A) edge[->] (Y)
    ;

    \path
    (O) edge[->, information] (A)
    ;

    \cidlegend[right = of Y, yshift=-5mm]{
      \legendrow{}{chance node} \\
      \legendrow{decision}{decision node}\\
      \legendrow{utilityc, chamfered rectangle xsep=1.5pt, chamfered rectangle ysep=1.5pt}{utility node}\\
      \legendrow[causal]{draw=none}{causal link} \\
      \legendrow[information]{draw=none}{information link} \\ 
    }

    \path (causal.west) edge[->] (causal.east);
    \path (information.west) edge[->, information] (information.east);
    
  \end{influence-diagram}
  \caption{
    Example of a CID.
    A machine learning system is recommending calorie intake (decision) to optimize the
    user's fitness (utility). The optimal calorie intake depends on the person's
    physical activity, which cannot be measured directly.
    Instead, the decision must be based on a step count provided by a
    fitness tracker.
  }
  \label{fig:id-example}
\end{figure}

\Cref{fig:id-example} shows a CID for a machine learning
system that uses step count as a proxy for physical activity to recommend ideal
calorie intake.
This setup will be our running example throughout the rest of the paper.
For an additional example, a Markov decision process with unknown
state transition function is modeled in \cref{sec:unknown}.

As with causal models, the precise relationship between the nodes is
specified with conditional probability distributions.
One important difference between CIDs and causal graphs is that a
CID parameterization only specifies conditional probability distributions for
non-decision nodes,
as the decisions are made exogenously to the model.

\begin{definition}[CID model]
  \label{def:envm}
  A \emph{CID model} is a tuple $M=\envm$ where
  \begin{itemize}
  \item $\envg$ is a CID graph
  \item For each node $X\in\sW$, the parameterization $P$ specifies:
    \begin{itemize}
    \item a finite domain $\dom(X)$; for utility nodes $X\in\sU$,
      the domain must be real-valued $\dom(X)\subset\SetR$
    \item conditional probability distributions $P(x\mid \pa_X)$
      for all non-decision nodes $X\in\sW\setminus\sA$.
    \end{itemize}
  \end{itemize}  
\end{definition}

In the influence diagram literature,
it is common to also require that utility nodes lack children and are
deterministic functions of their parents (e.g.\ \citealp{Koller2003}).
We will refrain from requiring this, as it is an unnecessary restriction that
makes it awkward to model some situations, such as the MDP in
\cref{sec:unknown}.

\paragraph{Policies and expected utility}
A policy $\pi$ describes the decisions of an agent, via conditional
probability distributions $\pi(d\mid \pa_D)$ for each decision node $D\in\sA$.
A parameterization $P$ combined with a policy $\pi$,
induces a joint distribution $P(\cdot\mid \pi)$ over $\sW$.
The goal of the agent is to choose a policy $\pi$ that maximizes the sum of the
utility variables.
Following the convention in reinforcement learning \citep{Sutton2018},
we call this the \emph{value} of $\pi$:

\begin{definition}[Value function]
  \label{def:value-sa}
  Let $\envm$ be a CID model.
  The \emph{value} or \emph{expected utility} of a policy $\pi$ is
  \(
    V^{\pi}
    = \EE\left[ \sum_{U\in\sU}U \mmid  \pi\right]
    \)
    where the expectation is with respect to $P(\cdot\mid \pi)$.
  An \emph{optimal policy} $\pi^*$ is a policy that optimizes $V^\pi$,
  with \emph{optimal value} $V^*=V^{\pi^*}$.
\end{definition}

\section{Observation Incentives}
\label{sec:observation-incentives}

This section will be devoted to the following question:
\begin{quote}
  Which nodes would a decision maker like to know the outcome of, or \emph{observe},
  before making a decision?
  That is, which nodes have a positive \emph{value of information} \citep{Howard1966}.
\end{quote}
Following an introductory example (\cref{sec:observation-example}),
we give a natural definition of observation incentive, and show that it can be
identified in any CID graph (\cref{sec:observation-theorem}).
An explanation of how to apply the theorem and interpret the result is given in
\cref{sec:observation-method}.
We conclude the section with an application to fairness
(\cref{sec:fairness-example}).

\subsection{Introductory Example}
\label{sec:observation-example}

Let us start by heuristically identifying%
\footnote{%
  \cref{th:observation-sa} below verifies all claims in this subsection.}
the observation incentives in an extension of the fitness tracker example from
\cref{fig:id-example}.
As before, a machine learning system recommends calorie intake for optimizing fitness
based on a step-count proxy for physical activity.
To make the example more interesting, we have now added 
a node for a noisy estimate of walking distance based
solely on the step count (\cref{fig:observation-example}).
We ask the question:
Which nodes would it be useful for the machine learning system to observe in order to
provide the most accurate calorie intake recommendation for the goal of
optimizing the user’s fitness?

\begin{figure}
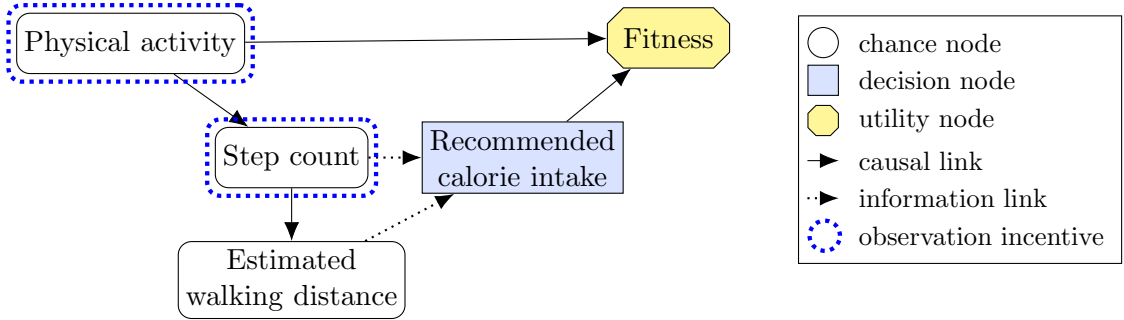

  \centering
  \begin{influence-diagram}

    \node (O) [] {Step count};
    \node (A) [decision, right = of O] {Recommended\\ calorie intake};

    \node (Op) [below = of O] {Estimated\\ walking distance};

    \node (hA) [above = of A, draw=none] {};
    \node (Y) [right = of hA, utilityc] {Fitness};

    \node (hS) [left = of O, draw=none] {};
    \node (S) [above = of hS] {Physical activity};

    \path
    (S) edge[->] (Y)
    (S) edge[->] (O)
    (A) edge[->] (Y)
    (O) edge[->] (Op)
    ;

    \path
    (O) edge[->, information] (A)
    (Op) edge[->, information] (A)
    ;
    \begin{scope}[ blue, dotted, ultra thick
      ]
      \node [fit = {(S)}] {};
      \node [fit = {(O)}] {};
    \end{scope}

    \cidlegend[right = of Y.north east, anchor=north west]{
      \legendrow{}{chance node} \\
      \legendrow{decision}{decision node}\\
      \legendrow{utilityc, chamfered rectangle xsep=1.5pt, chamfered rectangle ysep=1.5pt}{utility node}\\
      \legendrow[causal]{draw=none}{causal link} \\
      \legendrow[information]{draw=none}{information link} \\ 
      \legendrow{observation incentive}{observation incentive} \\
    }
    \path (causal.west) edge[->] (causal.east);
    \path (information.west) edge[->, information] (information.east);

  \end{influence-diagram}
  \caption{
    Observation incentives example.
    Here we return to the example of a machine learning system recommending calorie intake
    (\cref{fig:id-example}).
    To make it more interesting, we add a node for a (noisy)
    walking-distance estimate that is based solely on the step count.
    For deciding calorie intake, the step count but not the estimated walking
    distance provides useful information.
    The system also has an incentive to find out physical activity, even though it cannot measure it directly.
  }
  \label{fig:observation-example}
\end{figure}

First, it would be useful to observe physical activity, because
physical activity determines optimal calorie intake (by assumption).
In other words, there is an \emph{incentive} to observe physical activity.
Unfortunately, as the model is stated, it is not possible to observe physical activity
directly.
This makes the step count useful, because it can be used as a proxy
for physical activity.
In contrast, the estimate of the walking distance is not useful.%
\footnote{In the information-theoretic sense of the \emph{data processing
    inequality} \citep[Sec.~2.8]{Cover2006}.}
Even though it may contain information about the physical activity,
it cannot provide any additional information beyond the step count, because it is
only based on the step count in the first place.

Note that we do not ask the question whether the system wants to
observe the resulting fitness,
as it is a downstream effect of the decision.
Formally, observations of fitness are not permitted because they would introduce
cycles into the graph.

\subsection{Definition and Graphical Criterion}
\label{sec:observation-theorem}

If there is an observation incentive for a node $X$,
then the maximum expected utility should be strictly greater if
an information link $X\to D$ was present compared to if it was not.%
\footnote{Called a \emph{perfect observation} by \citet{Matheson2005}.}
It is straightforward to compare these two situations for a given CID model $M$,
because the parameterization $P$ only specifies conditional probability
distributions for non-decision nodes.
This means that the same $P$ can be kept
while information links are added or removed from the graph.

\begin{definition}[Single-decision observation incentive]
  \label{def:observation-incentive-sa}
  Let $M = \envmsa$ be a single-decision influence model and
  $X\in\sW\setminus\desc(D)$ a node not descending from $D$.
  Let $V^*_{X\to D}$ be the optimal value obtainable in $M$
  with an added information link $X\to D$, and 
  let $V^*_{X\not\to D}$ be the optimal value obtainable in $M$
  with any information link $X\to D$ removed.
  The agent has an
  \emph{observation incentive} for $X$
  if $V^*_{X\to D} > V^*_{X\not\to D}$. 
\end{definition}

As illustrated by the fitness tracker example in \cref{fig:observation-example},
what matters for observation incentives is whether a node carries information
about a utility node that can be influenced.
This can be assessed by a
d-separation criterion
(\cref{def:d-sep})
conditioned on the available information $\Pa_D$ and $D$.
For example, in \cref{fig:observation-example},
step count provides useful information while estimated walking distance does not.
This is explained by step count being d-connected to fitness via physical activity,
while estimated walking distance is d-separated from fitness because step count is observed.
Using this d-separation criterion, 
we can tell whether a CID graph is \emph{compatible} with an observation incentive
on a node $X$, i.e.\ whether observing $X$ would be useful under some
parameterization of the graph.

\begin{theorem}[Single-decision observation incentive criterion]
  \label{th:observation-sa}
  Let $\envgsa$ be a single-decision CID graph, and let
  $X\in\sW\setminus\desc(D)$ be a node not descending from the decision $D$.
  There exists a parameterization $P$ for $G$ in which the agent has an
  observation incentive for $X$
  if and only if $X$ is d-connected to a utility node that descends from $D$:
  \[X\not\dsep \sU \cap \desc(D) \mid \{D\} \cup \Pa_{D}\setminus \{X\} .\]
\end{theorem}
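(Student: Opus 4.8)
The plan is to reduce the statement to a property of a single scalar function and then treat the two directions separately. Write $\sU^D=\sU\cap\desc(D)$ for the utilities descending from $D$ and $\sO=\Pa_D\setminus\{X\}$. For $U\notin\desc(D)$ the expectation $\EE[U\mid\pi]$ is policy-independent, since an intervention on $D$ cannot affect its non-descendants, so these utilities contribute an additive constant $C$ to $V^\pi$ and can be dropped. Because a CID parameterization fixes the conditional distributions of non-decision nodes only, the expected total of $\sU^D$ given the decision context $\Pa_D$ and the chosen action $D=d$ equals the interventional quantity $f(\so,x,d)=\sum_{U\in\sU^D}\EE[U\mid \sO=\so,\,X=x,\,\mathrm{do}(D=d)]$, and (using $\sO,X\notin\desc(D)$) the analogue for the model without the link is $g(\so,d)=\sum_{U\in\sU^D}\EE[U\mid\sO=\so,\,\mathrm{do}(D=d)]=\sum_x P(x\mid\so)\,f(\so,x,d)$. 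Optimizing the policy gives $V^*_{X\to D}=C+\sum_\so P(\so)\sum_x P(x\mid\so)\max_d f(\so,x,d)$ while $V^*_{X\not\to D}=C+\sum_\so P(\so)\max_d g(\so,d)$, so convexity of $\max$ already yields $V^*_{X\to D}\ge V^*_{X\not\to D}$ for every $P$; only the strict inequality is in question.

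For the ``only if'' direction, suppose $X\dsep\sU^D\mid\{D\}\cup\sO$ in $G$. The graph $G_{\mathrm{do}(D)}$ used to evaluate $f$ is $G$ with every edge into $D$ deleted; an undirected path that is active in $G_{\mathrm{do}(D)}$ under any conditioning set is also a path of $G$ and, since deleting edges only shrinks descendant sets, is active in $G$ too. Hence $X\dsep\sU^D\mid\{D\}\cup\sO$ holds in $G_{\mathrm{do}(D)}$ as well, so by soundness of d-separation $X\perp\sU^D\mid\{D\}\cup\sO$ in every parameterization of $G_{\mathrm{do}(D)}$; equivalently $f(\so,x,d)$ is constant in $x$ on the support of $P(\cdot\mid\so)$. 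Then $f=g$ and the two optimal values coincide, so no parameterization exhibits an observation incentive.

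For the ``if'' direction I would argue by construction. Assume $X\not\dsep\sU^D\mid\{D\}\cup\sO$ in $G$, fix $U^*\in\sU^D$ with an active path $p\colon X\upathto U^*$ conditioned on $\{D\}\cup\sO$, and a directed path $r\colon D\pathto U^*$. First one checks that $p$ avoids $D$: passing through $D$ would require $D$ to be a conditioned chain or fork node (blocked) or a conditioned collider $P\to D\gets Q$ with $P,Q\in\Pa_D$, but any parent of $D$ other than $X$ lies in $\sO$ and, not being a collider on the path, blocks it, and $P,Q$ cannot both equal $X$. After deleting the in-edges of $D$, $p$ can be rerouted into an active path $\hat p\colon X\upathto U^*$ conditioned on $\sO$ alone, because a collider on $p$ whose only conditioned descendant is $D$ must be an ancestor of $X$ (its directed path to $D$ passes through the edge $X\to D$) and can therefore be bypassed by following that directed path down to $X$; iterating removes all such colliders, and $r$ is likewise active given $\sO$ since its interior nodes descend from $D$. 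I would then take all chance nodes binary and $U^*\in\{0,1\}$: each node on $r$ copies its predecessor (so $U^*$ sees the value of $D$); the standard d-connection gadget is installed on $\hat p$ (chain and fork nodes copy, colliders take the XOR of their two path-neighbours, and each collider copies its bit down to its conditioned descendant in $\sO$), so that conditioning on $\sO=\so$ transmits a bit $b(X,\so)$ to $U^*$; all remaining chance nodes and all nodes of $\sO$ get fixed or uniform CPDs; and $U^*$ is the indicator that the bit arriving along $\hat p$ equals the bit arriving along $r$. For a suitable $\so$, without observing $X$ the transmitted bit is uniform and independent of $\Pa_D$, so every policy yields $\EE[U^*]=1/2$, whereas observing $X$ and playing $D=b(X,\so)$ yields $\EE[U^*]=1$; hence $V^*_{X\to D}>V^*_{X\not\to D}$.

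The main obstacle is making this last construction airtight. One has to handle overlaps between $p$, $r$, and the directed bypass segments; pin down the direction of the edge of $\hat p$ incident to $U^*$ — it must point into $U^*$, for otherwise $X\in\desc(U^*)\subseteq\desc(D)$, contradicting $X\notin\desc(D)$ — and separately treat the case where $\hat p$ and $r$ reach $U^*$ along the same edge; and verify that the gadget genuinely makes $\EE[U^*\mid\sO,X,\mathrm{do}(D)]$ depend on $X$ and $D$ in the ``rotating'' way that flips the optimal action for at least one value of $\sO$. These are routine manipulations in the spirit of completeness proofs for d-separation, but the casework carries the weight of the argument; everything else reduces to the bookkeeping above.
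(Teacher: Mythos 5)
Your proposal is correct and follows essentially the same route as the paper: the \emph{only if} direction reduces to soundness of d-separation (so for each decision context an optimal action can be chosen independently of a d-separated $X$), and the \emph{if} direction builds a parity-transmitting parameterization along a directed path $D\pathto U$ paired with an active backdoor path from $X$, which is exactly the paper's supporting-paths construction with $\{0,1\}$/XOR in place of $\{-1,1\}$/products. The path-surgery casework you defer (colliders activated only through $D$, overlaps among the paths, the merge node where the two paths meet) is the same bookkeeping the paper's own completeness construction handles, so nothing essential is missing.
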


The theorem follows from \cref{th:soundness-sa,th:completeness-sa} in
\cref{ch:observation-proofs}.
The \emph{only if} part of the statement have previously been shown by
\citet{Fagiuoli1998,Lauritzen2001},
though they focused on a subset of our question:
namely which observed nodes $O\in\Pa_D$ are compatible with an observation incentive,
i.e.\ which observations are useful (\emph{requisite}) and not.%
\footnote{
  \citet{Lauritzen2001} also show how the \emph{only if} part of
  the criterion is extended to CIDs with multiple decision nodes.
}
In contrast, our interest is equally in which unobserved nodes
the agent would like to observe.
Nonetheless, some terminology for observation incentives in the decision context
$\Pa_D$ will be useful.

\begin{definition}[Requisite observations]
  \label{def:requisite-observation}
  An observation $O\in\Pa_D$ is a \emph{requisite observation}
  if it satisfies the observation incentive criterion (\cref{th:observation-sa}).
  Let $\Pa^*_{D}\subseteq \Pa_{D}$ denote the set of requisite observations.
  The rest of the observations $\Pa_{D}\setminus\Pa^*_{D}$ are \emph{nonrequisite}.
  Extending the terminology to information links,
  an information link $\Pa^*_{D} \to D$ is \emph{requisite},
  and an information link $(\Pa_D\setminus\Pa^*_{D}) \to D$ is \emph{nonrequisite}.
\end{definition}

Since an optimal decision need not depend on nonrequisite observations,
for many purposes we can remove the information links from these nodes.
The reduced graph will be important for analyzing intervention incentives
(\cref{sec:intervention-incentives}),
as well as observation incentives in multi-decision and
multi-agent CID graphs (Part II and \citealp{Lauritzen2001}).

\begin{definition}[Reduced graph]
  \label{def:reduced-graph}
  The \emph{reduced graph} $G^*$ of a single-decision CID graph $G$
  is the result of removing all nonrequisite information links from $G$.
\end{definition}

\subsection{How to Use and Interpret the Criterion}
\label{sec:observation-method}

\paragraph{Method}
Concretely, the observation incentive criterion can be applied per the
following.
To check whether a node $X$ may face an observation incentive,
begin by checking whether $X$ is a descendant of $D$.
Only if it is not a descendant can we enquire about its observation incentives.
If it is not a descendant of $D$, then
check whether it is d-connected to $U\cap\desc(D)$ when conditioning on
$D$ and $\Pa_D $ but not $X$ with the following procedure:

Begin by \emph{marking} the nodes $D$ and $\Pa_D\setminus \{X\}$
as nodes to be conditioned on.
There is an observation incentive for $X$ if and only if
it is possible to:
\begin{enumerate}
\item Go forward%
  \footnote{\emph{Going forward} means following the arrows,
    and \emph{going backwards} means going in the reverse direction.
  }
  from $D$ to a utility node $U\in\sU$
\item Starting from $U$, it is possible to reach $X$ using the following rules:
\begin{enumerate}
\item Go backwards without passing any marked node.
  At any point, switch to step b.
\item Go forward without passing any marked node.
  When reaching a marked node, switch to step a.
\end{enumerate}
\end{enumerate}
An intuitive way of thinking about the procedure is that paths can ``bounce
forward'' from unmarked nodes, and ``bounce backward'' from marked nodes.%
\footnote{
  For this reason, the procedure has been called the \emph{Bayes ball} algorithm
  \citep{Shachter1998},
though maybe \emph{Bayes\emph{ket} ball} would have been an even more appropriate
name for the procedure?
}
It is not necessary that the path ever bounces for there to be an observation
incentive for $X$.

\begin{figure}
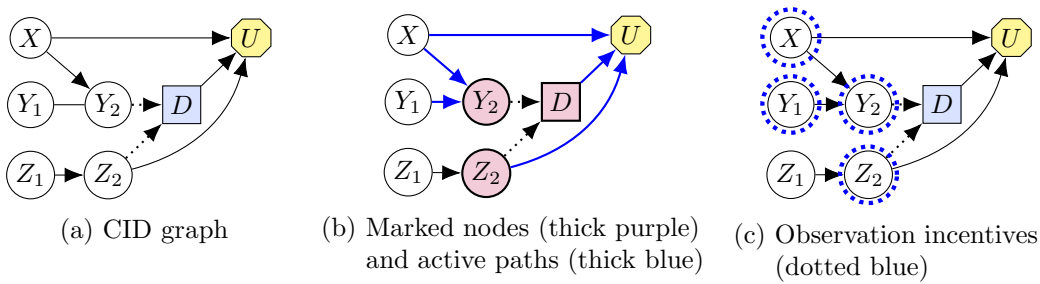

  \centering
  \begin{subfigure}{0.3\textwidth}
    \centering
    \begin{influence-diagram}
      \setcompact[every node/.append style = {circle}]

    \node (A) [decision] {$D$};

    \node (Y2) [left = of A] {$Y_2$};
    \node (Y1) [left = of Y2] {$Y_1$};

    \node (X) [above = of Y1] {$X$};

    \node (Z2) [below = of Y2] {$Z_2$};
    \node (Z1) [below = of Y1] {$Z_1$};

    \node (hU) [right = of A, draw=none] {};
    \node (U) at (hU |- X) [utility] {$U$};

    \path
    (A) edge[->] (U)
    (X) edge[->] (U)
    (X) edge[->] (Y2)
    (Y1) edge[-] (Y2)
    (Z1) edge[->] (Z2)
    (Z2) edge[->, bend right] (U)
    ;

    \path
    (Y2) edge[->, information] (A)
    (Z2) edge[->, information] (A)
    ;
  \end{influence-diagram}
  \caption{CID graph\\ \phantom{asdflj l}}
  \label{fig:observation-method-or}
\end{subfigure}
\begin{subfigure}{0.36\textwidth}
  \centering
  \begin{influence-diagram}
    \setcompact[every node/.append style = {circle}]

    \node (A) [decision, fill=purple!20, thick] {$D$};

    \node (Y2) [left = of A, fill=purple!20, thick] {$Y_2$};
    \node (Y1) [left = of Y2] {$Y_1$};

    \node (X) [above = of Y1] {$X$};

    \node (Z2) [below = of Y2, fill=purple!20, thick] {$Z_2$};
    \node (Z1) [below = of Y1] {$Z_1$};

    \node (hU) [right = of A, draw=none] {};
    \node (U) at (hU |- X) [utility] {$U$};

    \path
    (A) edge[->, thick, blue] (U)
    (X) edge[->, thick, blue] (U)
    (X) edge[->, thick, blue] (Y2)
    (Y1) edge[->, thick, blue] (Y2)
    (Z1) edge[->] (Z2)
    (Z2) edge[->, bend right, thick, blue] (U)
    ;

    \path
    (Y2) edge[->, information] (A)
    (Z2) edge[->, information] (A)
    ;
  \end{influence-diagram}
  \caption{Marked nodes (thick purple)\\ and active paths (thick blue)}
  \label{fig:observation-method-marked}
\end{subfigure}
\begin{subfigure}{0.3\textwidth}
  \centering
  \begin{influence-diagram}
    \setcompact[every node/.append style = {circle}]

    \node (A) [decision] {$D$};

    \node (Y2) [left = of A] {$Y_2$};
    \node (Y1) [left = of Y2] {$Y_1$};

    \node (X) [above = of Y1] {$X$};

    \node (Z2) [below = of Y2] {$Z_2$};
    \node (Z1) [below = of Y1] {$Z_1$};

    \node (hU) [right = of A, draw=none] {};
    \node (U) at (hU |- X) [utility] {$U$};

    \path
    (A) edge[->] (U)
    (X) edge[->] (U)
    (X) edge[->] (Y2)
    (Y1) edge[->] (Y2)
    (Z1) edge[->] (Z2)
    (Z2) edge[->, bend right] (U)
    ;

    \path
    (Y2) edge[->, information] (A)
    (Z2) edge[->, information] (A)
    ;

    \begin{scope}[observation incentive,
      every node/.append style = {inner sep=0.5mm, minimum size=0.75cm},
      ]
      \node at (X) {};
      \node at (Y2) {};
      \node at (Y1) {};
      \node at (Z2) {};
    \end{scope}
  \end{influence-diagram}
  \caption{Observation incentives\\ (dotted blue)}
  \label{fig:observation-method-done}
\end{subfigure}
\caption{How to use the observation incentive criterion.}
\end{figure}

For example, in the graph shown in \cref{fig:observation-method-or}, we begin by
marking the nodes $D$ and $\Pa_D=\{Y_2, Z_2\}$
(\cref{fig:observation-method-marked}).
Then we start at $D$, and reach $U$ in a single step.
There are three ways to go backwards from $U$: to $X$, to $D$, and to $Z_2$.
The \emph{active paths} they give rise to are illustrated
with thick blue paths in \cref{fig:observation-method-marked}.
Let us consider these in turn.
The topmost path (to $X$) can ``bounce'' forward again at $X$, since $X$ is an
unmarked fork node.
From $X$ we can go forward to $Y_2$, which is a marked node, and therefore
allows us to ``bounce'' backwards again, to $Y_1$.
From $Y_1$ we can go no further however, and we have exhausted the possible
paths arising from $X$.
The middle path (to $D$) only reaches $D$, which is a descendant of $D$ and
therefore disregarded.
Since $D$ is marked, the path stops here.
The bottommost path to $Z_2$ reaches $Z_2$. Since $Z_2$ is marked, the path stops
here.
The nodes that are not a descendant of $D$ and that have been reached through one
of these paths are the nodes facing an observation incentive; see
\cref{fig:observation-method-done}.

\paragraph{Interpretation}
Once we know whether there is an observation incentive for a node $X$, we need to know
how to interpret the result.
Observation incentives have slightly different interpretations for observed and
unobserved nodes.
For observed nodes $X\in\Pa_D$, an observation incentive simply means that the
agent's optimal decision may depend on $X$, as with step count in
\cref{fig:observation-example}.
In other words, the node is \emph{requisite} for an optimal decision.
For unobserved nodes $Y\not\in\Pa_D$, an observation incentive means that an
agent with additional access to $Y$ may be able to achieve higher expected utility.
In practice, this can mean that the agent (partially) infers $Y$ from
information that it does have access to. 
A good example of this is physical activity in \cref{fig:observation-example},
which is partially inferred from step count.
In situations where the model is only an approximation of reality, it can also
mean that the agent finds a way to directly observe $Y$.
Examples of this could be a poker player that takes a sneak peak at his
opponents cards, or a company that orders an extra market analysis before making
a decision.

\subsection{Application to Fairness}
\label{sec:fairness-example}

Let us see how observation incentives can be applied in questions of
fairness and discrimination \citep{ONeil2016}.
One type of discrimination is \emph{disparate treatment} \citep{Barocas2016},
which occurs when a decision process treats people differently based on
\emph{sensitive attributes} such as race or gender.
However, what this means formally is still subject to intense debate
(e.g.\ \citealp{Corbett-Davies2018,Gajane2017}).
In this section, we illustrate how observation incentives for sensitive
attributes can contribute to this discussion.

As an example, we will consider the Berkeley admission case \citep{Bickel1975}.
In this case, it was found that the admission rate for men was higher than for women.
However, the difference in admission rate was explained by women applying to more
competitive departments than men.
Was the university guilty of discriminating against women?

A nuanced account of the situation can be obtained using causal graphs
\citep{Pearl2009,Mancuhan2014,Bonchi2017,Chiappa2019,Kilbertus2017,Kusner2017,Zhang2017}.
Using a causal graph similar to the one represented in \cref{fig:fair1},
\citet{Pearl2009} argues that since the influence from gender to admission
was mediated by department choice, the university was not discriminating against
women.
An assumption in Pearl's argument is that the university was using the
applicant's department choice to fit the right number of students into each
department.
This assumption can be made explicit in the
\emph{path-specific counterfactual fairness} framework \citep{Chiappa2019},
where causal pathways from sensitive attributes to decision nodes are labeled
\emph{fair} or \emph{unfair}.
For example, the path from gender to admission would be considered fair if
department choice was used to fit the right number of students into each
department, and unfair if the university was using department choice to covertly
gender bias the student population by lowering the admission rate for
departments that women were more likely to apply to.

\begin{figure}[t]
  \centering
  \begin{subfigure}{0.48\textwidth}
  \centering
  \begin{influence-diagram}[node distance=0.5cm and 2.5cm]
    \node (a) {Admit?};
    \node (d) [left = of a] {Department\\ choice};
    \node (g) [above = of d] {Gender};

    \path
    (g) edge[->] (d)
    ;

    \begin{scope}{node distance = 0.1mm}
      \draw[->] (d) -- (a) node[midway,below,draw=none,yshift=0mm] {fair/unfair?};
    \end{scope}

  \end{influence-diagram}
  \caption{Causal graph with path-label in the path-specific
    counterfactual fairness framework \\ \phantom{asdfs}}
  \label{fig:fair1}
\end{subfigure}
\begin{subfigure}{0.48\textwidth}
  \raggedleft
  \begin{tikzpicture}
    \cidlegend{
      \legendrow{}{chance node} \\
      \legendrow{decision}{decision node}\\
      \legendrow{utilityc, chamfered rectangle xsep=1.5pt, chamfered rectangle ysep=1.5pt}{utility node}\\
      \legendrow[causal]{draw=none}{causal link} \\
      \legendrow[information]{draw=none}{information link} \\ 
      \legendrow{observation incentive}{observation incentive} \\
    }

    \path (causal.west) edge[->] (causal.east);
    \path (information.west) edge[->, information] (information.east);
    
  \end{tikzpicture}
\end{subfigure}
  \begin{subfigure}[t]{0.48\textwidth}
  \centering
  \begin{influence-diagram}\setcompact[node distance=6.5mm and 0.6cm,
    every node/.append style = {minimum height=0.6cm}
    ]
    \node (a) [decision] {Admit?};
    \node (d) [left = of a] {Department \\ choice};
    \node (g) [above = of d] {Gender};
    \node (gpa) [right = of a, utilityc] {Student \\ performance};
    \node (pos) at (gpa |- g) [utilityc] {Right $\#$ of\\ students per \\
      department};

    \path
    (g) edge[->] (d)
    (d) edge[->, information] (a)
    (a) edge[->] (gpa)
    (a) edge[->] (pos)
    (d.north east) edge[->] (pos)
    ;

    \begin{scope}[ observation incentive
      ]
      \node [fit = {(d)}] {};
    \end{scope}
    
  \end{influence-diagram}
  \caption{CID graph of unbiased university}
  \label{fig:fair2}
\end{subfigure}
  \begin{subfigure}[t]{0.48\textwidth}
  \centering
  \begin{influence-diagram}
    \setcompact[
    node distance=6.5mm and 0.6cm,
    every node/.append style = {minimum height=0.6cm}]
    \node (a) [decision] {Admit};
    \node (d) [left = of a] {Department \\ choice};
    \node (g) [above = of d] {Gender};
    \node (gpa) [right = of a, utilityc] {Student\\ performance};
    \node (bias) at (gpa |- g)  [utilityc] {$\%$ men};

    \path
    (g) edge[->] (d)
    (d) edge[->, information] (a)
    (a) edge[->] (gpa)
    (a) edge[->] (bias)
    (g) edge[->] (bias)
    ;

    \begin{scope}[ observation incentive ]
      \node [fit = {(d)}] {};
      \node [fit = {(g)}] {};
    \end{scope}
  \end{influence-diagram}
  \caption{CID graph of gender-biased university}
  \label{fig:fair3}
\end{subfigure}
\caption{Graphical representations of the Berkeley admission case
  \citep{Bickel1975}.
  }
  \label{fig:fairness}
\end{figure}
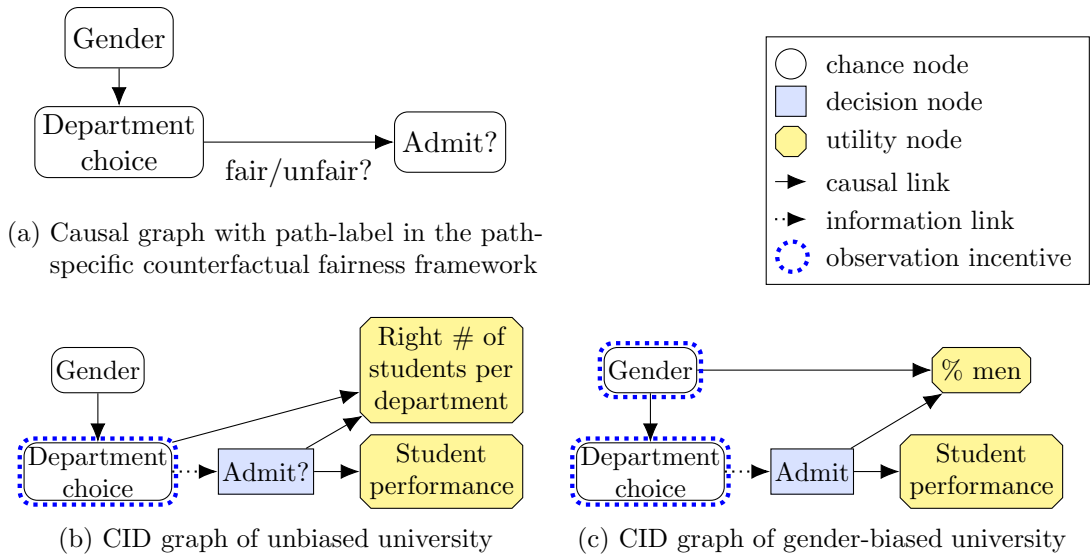

Observation incentives offer an alternative to path-labeling for judging
disparate treatment.
Universities can be modeled as agents that choose which students to admit in
order to optimize an objective function such as student performance.
Consider the CIDs in \cref{fig:fair2,fig:fair3}
of two universities that have different additional objectives beside student performance.
The university in \cref{fig:fair2} tries to fit the right number of students into each
department; the university in \cref{fig:fair3} covertly tries to gender bias the
student population by using department choice as a proxy for gender.
As both universities only use department choice for the decision,
the causal pathway from gender to admission is the same in both cases.

\begin{samepage}
We can use observation incentives to explain the difference in fairness between
the universities,
from the different information they infer from department choice:
\begin{itemize}
\item The first university has no observation incentive for gender. It is only
  using the department choice to fit the right number of students into each
  department. 
\item The second university may have an observation incentive for gender. It
  may therefore be using department choice to infer the gender of the student,
  which may render it guilty of disparate treatment.
\end{itemize}
\end{samepage}

The need to know the objectives of the decision maker somewhat limits the
applicability of incentives-based fairness approaches.
For example, an outsider may be unable to find out the objectives of the
universities in the above example.
This difficulty is resembles the difficulty of correctly labeling paths
fair or unfair in the path-specific counterfactual fairness approach.
However, an advantage with the observation incentive approach is that when
we are training a machine learning system, then we are aware of what objective
function the system is optimizing, and what information the system has access
to.
Combined with a CID for how the objective and the observed
information interacts with the sensitive attributes,
the observation incentive criterion can be used to identify which
incentives emerge from this objective, and whether they involve problematic
inference of sensitive attributes.

\pagebreak
\section{Intervention Incentives}
\label{sec:intervention-incentives}

This section asks the question:
\begin{quote}
  Which nodes would an agent like to influence or control?
  That is, which nodes face a positive \emph{value of control} \citep{Shachter2010}.
\end{quote}
Building on the observation incentive criterion,
we establish an analogous \emph{intervention incentive} criterion
(\cref{sec:intervention-incentive-def}), and explain how
to use an interpret it (\cref{sec:intervention-method}).
The section concludes with an application to the incentives of
QA systems (\cref{sec:oracle-example}).

\subsection{Introductory Example}
\label{sec:intervention-example}

\begin{figure}
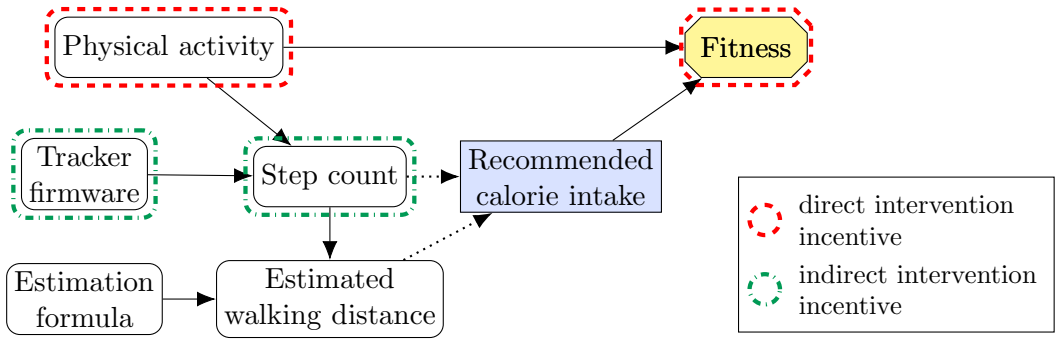

  \centering
  \begin{influence-diagram}
    \node (O) [] {Step count};
    \node (A) [decision, right = of O] {Recommended \\ calorie intake};

    \node (Op) [below = of O] {Estimated\\ walking distance};

    \node (hS) [left = of O, draw=none] {};
    \node (S) [above = of hS, yshift=2mm] {Physical activity};

    \node (Xp) [left = of Op] {Estimation\\ formula};
    \node (X) [above = of Xp] {Tracker\\ firmware};

    \node (hA) [right = of A, draw=none] {};
    \node (Y) at (hA |- S) [utilityc] {Fitness};

    \path
    (S) edge[->] (Y)
    (S) edge[->] (O)
    (A) edge[->] (Y)
    (O) edge[->] (Op)
    (X) edge[->] (O)
    (Xp) edge[->] (Op)
    ;

    \path
    (O) edge[->, information] (A)
    (Op) edge[->, information] (A)
    ;

    \begin{scope}[intervention incentive, 
      ]
      \node [fit = {(S)}] {};
      \node at (Y) [chamf, inner sep=1.5mm, minimum size=10mm] {\textcolor{black}{Fitness}};
    \end{scope}

    \begin{scope}[information incentive,
      ]
      \node [fit = {(O)}] {};
      \node [fit = {(X)}] {};
    \end{scope}

    \cidlegend[right = of A, anchor=north west]{
      \legendrow{intervention incentive}{direct intervention\\ incentive} \\
      \legendrow{information incentive}{indirect intervention\\ incentive}
    }

  \end{influence-diagram}
  \caption{
    Intervention incentives example.
    As in
    \cref{fig:id-example,fig:observation-example},
    a machine learning system uses an activity tracker
    to recommend calorie intake for optimizing fitness.
    Interventions can contribute utility either \emph{directly} by influencing a
    utility node, or \emph{indirectly} by increasing the information available
    at a decision.
    An example of the former kind would be to increase physical activity to improve
    fitness.
    An example of the latter kind is upgrading the tracker firmware to
    make the step count more accurate, as it would enable a
    more informed recommendation of calorie intake.
    In contrast to both of these types,
    improving the estimate of the walking distance would have no value at all,
    since the estimated walking distance is not a requisite observation.
  }
  \label{fig:intervention-example}
\end{figure}

Continuing the example from \cref{sec:observation-example},
let us also heuristically identify%
\footnote{All claims made in this subsection are verified by \cref{th:soft-sa} below.}
intervention incentives in the CID graph in \cref{fig:intervention-example}.
As before, a machine learning system recommends calorie intake for optimizing fitness
based on information provided by fitness tracker.
We ask the question:
\emph{Which nodes would be useful to influence in addition to the calorie intake?}
In other words, influence over which nodes would enable the system to optimize
its utility?

Trivially, the system would like to control fitness, since that is its
optimization target.
Similarly, influencing physical activity means indirectly controlling fitness,
and would therefore be useful as well.
The situation is more subtle with the ancestors of calorie intake.
To start with, the only benefit of step count is its informativeness about
physical activity.
This means that interventions that increase the accuracy of step count
are useful.
An example of such an intervention is to update the tracker firmware.
In contrast, interventions on estimated walking distance are never useful, as it is
not used in an (optimal) decision for calorie intake anyway, as discussed in
\cref{sec:observation-example}.

\subsection{Definition and Graphical Criterion}
\label{sec:intervention-incentive-def}

Our definition of intervention incentive is analogous to the definition of
observation incentive (\cref{def:observation-incentive-sa}).
Instead of considering observing an extra node, we consider controlling an extra node,
where control is formalized with soft interventions:

\begin{definition}[Soft intervention; \citealp{Eberhardt2007}; {\citealp[p.~74]{Pearl2009}}]
  \label{def:soft-intervention}
  A \emph{(soft) intervention} $c^X$ on a non-decision node $X$
  in a CID model $M = \envmsa$
  changes the conditional probability distribution for $X$ from $P(x\mid \pa_X)$ to
  $c^X(x\mid \pa_X)$, while leaving all other conditional probability
  distributions intact.%
  \footnote{
    It is sometimes more natural to think of soft interventions
    as changing the relation between $\Pa_X$ and $X$, rather than changing $X$ directly.
    However, following the convention in the literature, we will speak of them as
    interventions on the node $X$ and nothing else.}
  We write $P(\cdot \mid c^X)$ for the updated probability distribution.
\end{definition}

Control can also be formalized by adding extra decision nodes
\citep{Matheson2005,Shachter2010}.
Indeed, soft interventions correspond to a probabilistic generalization
of \emph{perfect control} \citep{Matheson2005},
and to \emph{atomic interventions on a mapping variable} \citep{Shachter2010}.

\begin{definition}[Single-decision intervention incentive]
  \label{def:intervention-incentive-sa}
  Let $M = \envmsa$ be a single-decision CID model and $X$ a non-decision node
  $X\in \sW\setminus\{D \}$.
  Let $V^{\pi,c^X} = \EE\left[\sum_{U\in\sU}U\mmid \pi, c^X\right]$ be the value
  of following policy $\pi$ and controlling $X$ with intervention~$c^X$.
  The agent has an \emph{intervention incentive} on $X$ 
  if $\max_{\pi,c^X}V^{\pi,c^X} > \max_{\pi'}V^{\pi'}$.
\end{definition}

Similarly to observation incentives,
a graphical criterion can tell us whether a CID graph is compatible with an
intervention incentive on a non-decision node $X$. 

\begin{theorem}[Single-decision intervention incentive criterion]
  \label{th:soft-sa}
  Let $X\in\sW\setminus \{D\}$ be a non-decision node
  in a single-decision CID graph $G = \envgsa$.
  There exists a parameterization $P$ for $G$ such that
  the agent has an intervention incentive for $X$
  if and only if there is a directed path
  $X\pathto \sU$ in the reduced graph $G^*$.
\end{theorem}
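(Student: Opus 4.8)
The plan is to prove the two directions separately, using the reduced graph $G^*$ as the central object. For the \emph{if} direction, suppose there is a directed path $X \pathto \sU$ in $G^*$. I would distinguish two cases depending on whether this path ever passes through $D$ (equivalently, whether it uses a requisite information link into $D$). If the path avoids $D$, it is a purely ``downstream'' path of non-decision nodes from $X$ to some utility node $U$, and I would construct a parameterization making every variable on the path a deterministic copy of its predecessor, with $U$ essentially equal to $X$; then a soft intervention $c^X$ setting $X$ to its utility-maximizing value beats the no-intervention optimum, giving a \emph{direct} intervention incentive. If the path does pass through $D$, then it factors as $X \pathto O$ for some requisite observation $O \in \Pa^*_D$, followed by the information link $O \to D$, followed by a directed path $D \pathto U$. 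Here the construction is more delicate: I would use the fact that $O$ is requisite (\cref{th:observation-sa}) to build a parameterization in which observing $O$ strictly helps, make $O$ a deterministic function of $X$ along the path, and set up the intervention $c^X$ so that it makes $O$ more informative about the relevant influenceable utility node, yielding an \emph{indirect} intervention incentive. The delicate point is ensuring these two sub-constructions can be carried out on the \emph{original} graph $G$ with an information structure consistent with $G^*$ — but since $G^*$ differs from $G$ only by removed nonrequisite links, and nonrequisite observations can be ignored by an optimal policy, this should go through.

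For the \emph{only if} direction, suppose there is \emph{no} directed path $X \pathto \sU$ in $G^*$. I want to show that for every parameterization $P$, every policy $\pi$, and every soft intervention $c^X$, we have $V^{\pi, c^X} \le \max_{\pi'} V^{\pi'}$. The key observation is that in $G^*$, the only way $X$ can affect utility is through descendants of $X$, and if no utility node is a descendant of $X$ in $G^*$, then the only remaining route of influence is via information links into $D$ from nodes that are descendants of $X$. But by definition of $G^*$, every information link into $D$ is requisite; and if $X$ reaches $D$ only through such links without reaching any utility node downstream, we still need to rule out an indirect benefit. The cleanest way I see to handle this: decompose any descendant of $X$ in $G$ into (a) those that are descendants of $X$ in $G^*$ as well, and (b) those reachable only via nonrequisite links — and nonrequisite information into $D$ provably cannot raise $V^*$ (this is essentially the content of requisiteness / \cref{th:observation-sa}). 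So the value with $c^X$ is bounded by the value of a modified model where $X$'s influence is severed, which cannot exceed $\max_{\pi'} V^{\pi'}$.

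The main obstacle I anticipate is the \emph{only if} direction's handling of the interaction between soft interventions on $X$ and the information links into $D$. A soft intervention changes the conditional $P(x \mid \pa_X)$, which can change the joint distribution of $X$ together with $D$'s observed parents, and hence change what an optimal policy does; I need to argue carefully that when $X$ has no directed path to a utility node in $G^*$, none of this can be parlayed into extra expected utility. I expect to need a lemma of the form: if $X \notdsep$ has no directed path to $\sU$ in $G^*$, then $\sU \cap \desc(X) = \emptyset$ in $G^*$ and moreover conditioning on the requisite observations and $D$ blocks any active path from $X$ to an influenceable utility node, so that the value function is insensitive to the conditional distribution at $X$. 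Making this insensitivity argument rigorous — rather than merely intuitive — is where most of the work lies; the \emph{if} direction, by contrast, is mostly an explicit construction and should be comparatively routine given the analogous construction in the proof of \cref{th:observation-sa}.
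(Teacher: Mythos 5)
Your proposal is correct and follows essentially the same route as the paper's proof: the same case split in the \emph{if} direction (a deterministic copy construction when the directed path avoids $D$, and a reuse of the completeness construction for \cref{th:observation-sa} when the path enters $D$ through a requisite observation or an ancestor of one), and the same reduction of the \emph{only if} direction to the soundness half of the observation incentive criterion (\cref{th:soundness-sa}), namely that an optimal policy can ignore nonrequisite observations, so an intervention whose only directed routes to utility pass through nonrequisite information links cannot raise the optimal value. The ``insensitivity lemma'' you anticipate needing is exactly what the paper's appeal to \cref{th:soundness-sa} supplies.
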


The intuition for the criterion is that only if there is a path from $X$ to a
utility node can intervening on $X$ have any effect on the utility of the
agent.
Note that the criterion uses the reduced graph $G^*$
where nonrequisite information links have been cut
(\cref{def:reduced-graph}),
because nonrequisite observations do not affect the optimal decision,
and therefore cannot propagate the effect of the intervention.
A proof of the criterion can be found in 
\cref{sec:intervention-proofs}.

\paragraph{Types of intervention incentives}
Note that the path $X\pathto U$ in \cref{th:soft-sa}
is allowed to pass through the decision $D$.
The question of whether it does, allows us to distinguish between two
different reasons the agent wants to intervene on $X$:
\begin{itemize}
\item
  The path $X\pathto \sU$ yields a \emph{direct%
    \footnote{The intervention incentive is \emph{direct} in the sense that it does
      not pass $D$. The effect from $X$ to $\sU$ may still be mediated by other
      variables.}
    intervention incentive} on $X$
  if the path does not pass $D$.
\item
  The path $X\pathto \sU$ yields an
  \emph{indirect intervention incentive} on $X$, if the path passes $D$
  and there is also another path $X\upathto\sU$ that is \emph{not} directed 
  and is active when conditioning on $\Pa_D\cup\{D\}$.
\end{itemize}
Extending the terminology, we say that there is an
\emph{(in)direct intervention incentive on $X$}
if there is a path yielding an (in)direct intervention incentive on $X$.
We will also speak of direct intervention incentive
as incentives \emph{for direct control} and indirect ones as incentives
\emph{for information}.
For example, the intervention incentives for step count and tracker firmware in
\cref{fig:intervention-example} are for information, whereas the
intervention incentives for physical activity are for direct control.
Note that the reasons are not mutually exclusive:
it is possible that an intervention can simultaneously provide both direct control
and information, if it is connected to utility nodes via several paths.
However, the types are collectively exhaustive:
if a node faces an intervention incentive, then it faces either a direct or an
indirect intervention incentives (or both).
In particular, if there is a path $X\pathto D\pathto U$ but the path fails
to provide an indirect intervention incentive, then there must also be a path
$X\pathto U$ not passing $D$, providing a direct intervention incentive for $X$.

\subsection{How to Use and Interpret the Criterion}
\label{sec:intervention-method}

\paragraph{Method}
To apply the intervention incentive criterion,
first cut all nonrequisite information links.
To do this, follow the procedure described in \cref{sec:observation-method} to
determine which observations face an observation incentive, and remove the
information links from those without observation incentive.
Once we have removed all nonrequisite information links and obtained the reduced
graph $G^*$, it is straightforward to assess intervention incentives:
there is an intervention incentive on a node $X$ if and only if $X$ is not the
decision node and
there is a directed path from $X$ to a utility node $U\in\sU$ in
the reduced graph $G^*$.

For example, in the fitness tracker example in \cref{fig:intervention-example},
the information link from estimated walking distance to calorie intake will be
cut as it is nonrequisite.
After that, there is no directed path from estimated walking distance to the
utility node fitness, which means that there is no intervention incentive
on estimated walking distance.
In contrast, the information link from step count to calorie intake is not cut
because it is requisite.
Therefore a directed path remains to fitness, which means that there is a
intervention incentive for step count and tracker firmware.

\begin{figure}
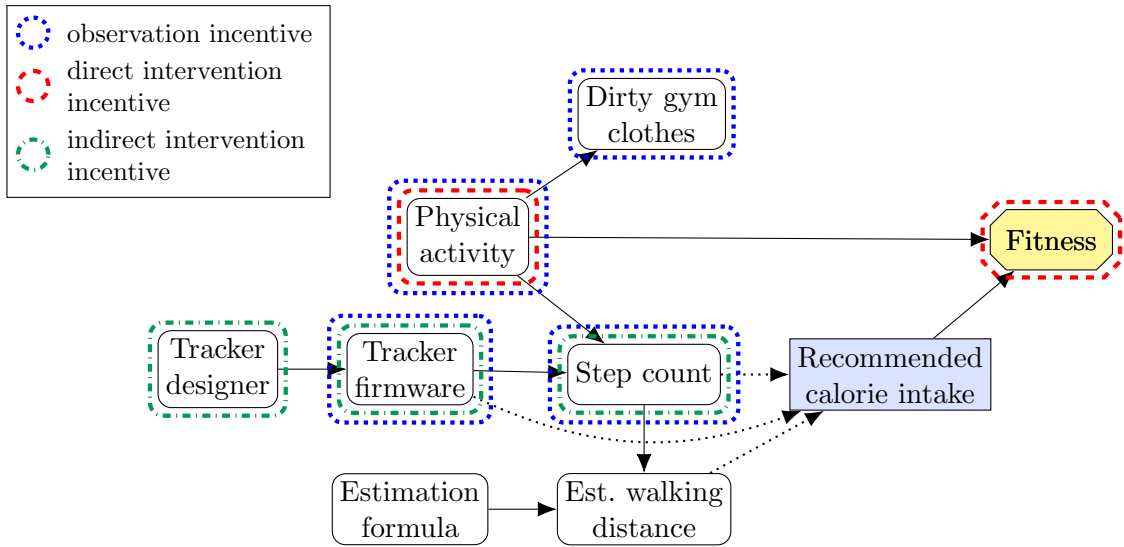

  \centering
  \begin{influence-diagram}[node distance=0.9cm]
    \node (O) [] {Step count};
    \node (A) [decision, right = of O] {Recommended\\ calorie intake};

    \node (Op) [below = of O] {Est.\ walking\\ distance};

    \node (hA) [above = of A, draw=none] {};
    \node (Y) [right = of hA, utilityc] {Fitness};

    \node (hS) [left = of O, draw=none] {};
    \node (S) [above = of hS] {Physical\\ activity};

    \node (Xp) [left = of Op] {Estimation\\ formula};
    \node (X) [above = of Xp] {Tracker\\ firmware};

    \node (Xpp) [left = of X] {Tracker\\ designer};

    \node (Sp) [above right = of S] {Dirty gym\\ clothes};

    \path
    (S) edge[->] (Y)
    (S) edge[->] (O)
    (A) edge[->] (Y)
    (O) edge[->] (Op)
    (X) edge[->] (O)
    (Xp) edge[->] (Op)
    (Xpp) edge[->] (X)
    (S) edge[->] (Sp)
    ;

    \path
    (O) edge[->, information] (A)
    (Op) edge[->, information] (A)
    (X) edge[->, information, bend right=23] (A)
    ;

    \begin{scope}[intervention incentive, inner sep=0.1mm]
      \node (Si) [fit = {(S)}] {};
      \node at (Y) [chamf, inner sep=2mm, minimum height=10mm] {\textcolor{black}{Fitness}};
    \end{scope}

    \begin{scope}[information incentive, inner sep=0.1mm]
      \node [fit = {(Xpp)}] {};
      \node (Xi) [fit = {(X)}] {};
      \node (Oi) [fit = {(O)}] {};
    \end{scope}

    \begin{scope}[observation incentive, inner sep=0.1mm]
      \node [fit = {(Sp)}] {};
      \node [fit = {(Xi)}] {};
      \node [fit = {(Oi)}] {};
      \node [fit = {(Si)}] {};
    \end{scope}

    \cidlegend[left = of S.north west, anchor=south east, node distance=5mm]{
      \legendrow{observation incentive}{observation incentive} \\
      \legendrow{intervention incentive}{direct intervention\\ incentive}
      \legendrow{information incentive}{indirect intervention\\ incentive}
    }
  \end{influence-diagram}
  \caption{
    Examples where observation incentives and intervention incentives
    deviate in a variant of the examples from
    \cref{fig:id-example,fig:intervention-example,fig:observation-example}.
    If the fitness tracker firmware is fully known, additional information about the
    tracker designer is not useful, so there is no observation incentive for
    tracker designer.
    But having been able to improve the tracker designer's design abilities would have
    been useful, as it could have resulted in a better tracker.
    Thus, there is an indirect intervention incentive on the tracker designer.
    In contrast,
    a side effect of (some types of) physical activity is dirty gym
    clothes.
    There is no point controlling dirty gym clothes, because making the gym
    clothes dirty by other means than physical activity will will not cause fitness.
    But observing whether the gym clothes are dirty would give some additional
    information about physical activity not necessarily present in the step
    count (especially if the tracker is not worn in the gym).
  }
  \label{fig:obs-vs-int}
\end{figure}

\paragraph{Interpretation}
Assume that we have established an intervention incentive for a node $X$.
How should we now interpret this?
If $X$ is a utility node, then trivially the agent wants to influence $X$, which
we already knew.
If $X$ is a non-utility node that is a descendant of some of the agent's
decision nodes, then an intervention incentive on $X$ suggests that the
agent may use its decision to control $X$ as an \emph{instrumental goal} in
order to ultimately gain some utility from it.
Finally, if $X$ is a not a descendant of any of the agent's decision nodes, then if the
model is to be interpreted literally, there is nothing the agent can do about
$X$.
We may wish that gravity was less strong, but there is not much we can do about
fundamental physical constants.

However, in many cases, the model is only an approximation of reality.
For example, a worry in the AI safety literature \citep{Everitt2018litrev}
is that an agent finds a way to
tamper with the reward signal, giving itself high reward without completing its
intended goals. 
Indeed, it has been demonstrated that the Super Mario game environment can
be made to run arbitrary code by selecting the right decision sequences \citep{Masterjun2014}.
This could in principle be used by the agent to hack the reward function to
maximize the reward without completing the game.
Such influences may break the designer's assumptions about how the agent can
influence the environment, and has been modeled with CIDs by
\citet{Everitt2019tampering}.

\paragraph{Comparison to observation incentives}
In many cases, nodes face either both an observation incentive and an
intervention incentive, or neither.
However, there are a few of notable cases where the incentives diverge.
\Cref{fig:obs-vs-int} shows a few of them.

\subsection{Application to Question-Answering Systems}
\label{sec:oracle-example}

In \emph{Superintelligence}, \citet{Bostrom2014} discusses different ways to use
powerful artificial intelligence.
One possibility is to let an agent continuously interact with the world to achieve
some long-term goal.
Another possibility is to construct a pure question-answering system 
(QA-system), with the only goal to correctly answer queries \citep{Armstrong2012oracles}.
QA-Systems have some safety benefits, as they only affect the world through their
answers to queries and can be constructed to lack long-term goals.

One safety concern with QA-systems is the following.
Assume that we ask our QA-system about the price of a particular stock one week
from now, in order to make some easy money trading it.
Then the answer will affect the world, because anyone who knows the QA-system's
answer will factor it into his or her trading decisions.
This effect may be enough to make the answer wrong, even if the answer would
have been right had no one heard of it.
More worryingly perhaps, the answer may also become a \emph{self-fulfilling prophecy}.
A respected QA-system that predicts the bankruptcy of a company within a week,
may cause the company to go bankrupt if the prediction leads to investors and
other stakeholders losing confidence in the business.

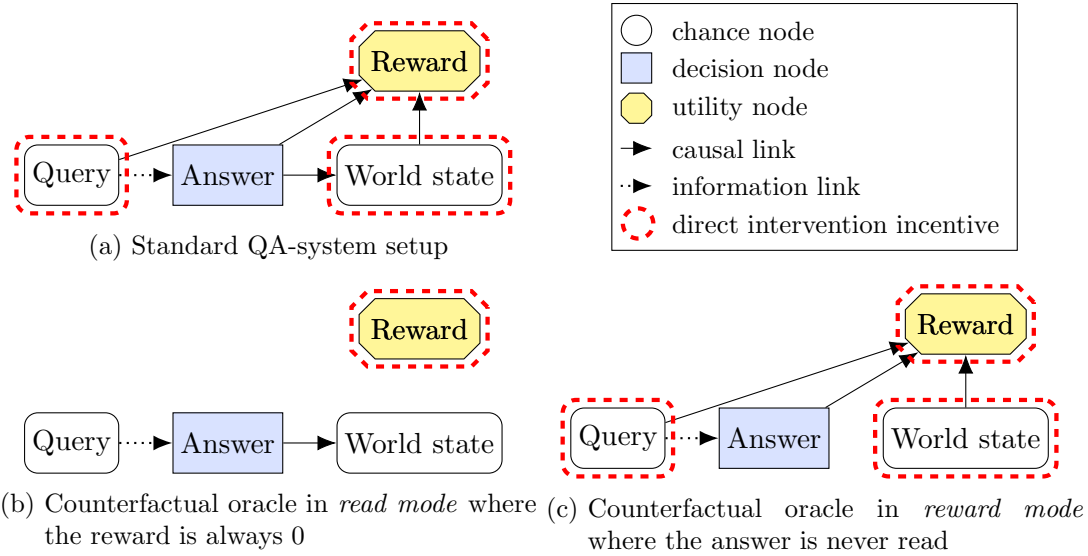
\begin{figure}[t]
  \centering
  \begin{subfigure}{0.48\textwidth}
    \centering
    \begin{influence-diagram}
      \node (Q) [] {Query};
      \node (A) [right = of Q, decision] {Answer};
      \node (S) [right = of A, rectangle, rounded corners=5] {World state};
      \node (U) [above = of S, utilityc, inner sep=0.5mm] {Reward};

      \path
      (Q) edge[->, information] (A)
      (A) edge[->] (S)
      (A) edge[->] (U)
      (S) edge[->] (U)
      (Q) edge[->] (U)
      ;

      \begin{scope}[minimum size=0.5cm, intervention incentive,
        every node/.style={ draw, inner sep=1mm, rectangle, rounded corners=5 }
        ]
        \node [fit = {(Q)}] {};
        \node [fit = (S)] {};
        \node at (U) [chamf, inner sep=1.5mm, minimum height=10mm] {\textcolor{black}{Reward}};
      \end{scope}

    \end{influence-diagram}
    \caption{Standard QA-system setup}
    \label{fig:oracle-example}
  \end{subfigure}
  \begin{subfigure}{0.48\textwidth}
    \centering
    \begin{tikzpicture}
    \cidlegend{
      \legendrow{}{chance node} \\
      \legendrow{decision}{decision node}\\
      \legendrow{utilityc, chamfered rectangle xsep=1.5pt, chamfered rectangle ysep=1.5pt}{utility node}\\
      \legendrow[causal]{draw=none}{causal link} \\
      \legendrow[information]{draw=none}{information link} \\ 
      \legendrow[bottom]{intervention incentive}{direct intervention incentive}
    }
    \path (causal.west) edge[->] (causal.east);
    \path (information.west) edge[->, information] (information.east);

    \node [node distance=3mm, below = of bottom, draw=none] {};
  \end{tikzpicture}
  \end{subfigure}
  
  \begin{subfigure}{0.48\textwidth}
    \centering
      \begin{influence-diagram}
      \node (Q) [] {Query};
      \node (A) [right = of Q, decision] {Answer};
      \node (S) [right = of A, rectangle, rounded corners=5] {World state};
      \node (U) [above = of S, utilityc, inner sep=0.5mm] {Reward};

      \path
      (Q) edge[->, information] (A)
      (A) edge[->] (S)
      ;

      \begin{scope}[minimum size=0.5cm, intervention incentive,
        every node/.style={ draw, inner sep=1mm, rectangle, rounded corners=5 }
        ]
        \node at (U) [chamf, inner sep=1.5mm, minimum height=10mm] {\textcolor{black}{Reward}};
      \end{scope}

    \end{influence-diagram}
    \caption{Counterfactual oracle in \emph{read mode} where the reward is
      always 0}
    \label{fig:oracle-reading}
    \end{subfigure}
    \begin{subfigure}{0.48\textwidth}
      \centering
      \begin{influence-diagram}
      \node (Q) [] {Query};
      \node (A) [right = of Q, decision] {Answer};
      \node (S) [right = of A, rectangle, rounded corners=5] {World state};
      \node (U) [above = of S, utilityc, inner sep=0.5mm] {Reward};

      \path
      (Q) edge[->, information] (A)
      (A) edge[->] (U)
      (S) edge[->] (U)
      (Q) edge[->] (U)
      ;

      \begin{scope}[minimum size=0.5cm, intervention incentive,
        every node/.style={ draw, inner sep=1mm, rectangle, rounded corners=5 }
        ]
        \node [fit = {(Q)}] {};
        \node [fit = (S)] {};
        \node at (U) [chamf, inner sep=1.5mm, minimum height=10mm] {\textcolor{black}{Reward}};
      \end{scope}

    \end{influence-diagram}
    \caption{Counterfactual oracle in \emph{reward mode} where the answer is
      never read}
    \label{fig:oracle-reward}
  \end{subfigure}
  \caption{Application of intervention incentives to QA-systems.
    In the standard QA-system setup, the agent has both incentive
    and ability to influence the world state, which is undesirable.
    By using \emph{counterfactual oracles} \citep{Armstrong2017oracles}, we
    alternate between a \emph{read mode} where the QA-system's reward is always
    0, and a \emph{reward mode} where the answer is never read.
    Thereby, whenever the QA-system has an incentive to influence the world state,
    it lacks the ability.
  }
\end{figure}

The QA-system setup is described by a CID in \cref{fig:oracle-example}.
For a given query, the QA-system's reward depends on whether its answer turns out
to be correct or not.
As people read the answer, the answer also affects the world state.
The correctness of the answer depends on the query, the answer, and the
resulting world state.
Our intervention incentive theorem implies an intervention incentive for world state.
That is, the QA-system wants to influence the world state, for example by
issuing a self-fulfilling prophecy.
This is unfortunate.
One of the main safety benefits of QA-systems was their limited
influence on the world state, which may now turn out to be less
limited than naively anticipated.

\emph{Counterfactual oracles} \citep{Armstrong2017oracles} has been suggested as
a way to fix this.
They work as follows.
After submitting the query, flip a suitably biased coin to determine which
\emph{mode} to use the QA-system in:
\begin{itemize}
\item \emph{Read mode}:
  If the coin comes up heads, we read the answer, and set the QA-system's
  reward to 0 regardless of whether the answer turns out to be correct or not.
\item \emph{Reward mode}:
  If the coin instead comes up tails, we refrain from reading the answer. Once
  we know the correct answer to the query (say, the price of the stock one week
  later), we feed this correct answer into a machine that compares the QA-system's
  answer against the correct answer. The machine rewards the QA-system $-1$ or $1$
  depending on whether its answer was correct.
\end{itemize}

We can use CIDs to understand the effects of this procedure.
\begin{itemize}
\item
  In \emph{read mode}, the arrows to reward are cut, because the reward is always 0
  in this case (\cref{fig:oracle-reading}).
  This removes the intervention incentive for world state.
\item
  In \emph{reward mode}, instead the arrow from answer to
  world state is cut, because the QA-system's answer is never read
  (\cref{fig:oracle-reward}). While the incentive for the QA-system to influence
  the world state remains, the QA-system no longer has a way to.
\end{itemize}
Thus, by randomly alternating between reading the answer and rewarding the
QA-system, we can ensure that whenever the QA-system has an incentive to influence the
world, it lacks the ability.
This makes the incentives of counterfactual oracles safer than the
incentives for standard QA-systems \citep{Armstrong2017oracles}.

\section{Related Work}
\label{sec:related-work}

This section gives an overview of past work and how it relates to this paper.

\subsection{Observation Incentives}
Interest in what information is valuable to a decision goes back to at least
\citet{Howard1966}.
\citet{Matheson1990,Matheson2005} discuss this question in the context
of influence diagrams.
In terms of graphical criteria,
\citet{Fagiuoli1998} built on a d-separation criteria similar to our
\cref{th:observation-sa}, 
to detect nonrequisite observations.
Their criteria applies to influence diagrams with multiple decisions,
but they only allow a single utility node.
Around the same time, \citet{Shachter1998} showed that his \emph{Bayes-ball}
algorithm could also be used to detect nonrequisite observations in
influence diagrams,
though he was less formal about what a requisite observation was.
Unfortunately, the Bayes-ball criteria sometimes fails to detect
nonrequisite nodes \citep{Nielsen1999}.
Better is to repeatedly to remove information links using the d-separation
criteria, as suggested by \citet{Lauritzen2001}.
The resulting graph is the same regardless of the order of
the edge-removals.
Not even \citeauthor{Lauritzen2001}'s criteria is complete, however, as it can
fail to detect nonrequisite nodes in graphs without perfect recall%
(see Part II of this paper).

Studying the slightly different question of when an influence diagram can be
solved with backwards induction,
\citet{Nielsen1999} provide a criteria for when a node is \emph{required} for
a decision. 
In contrast to other works, they prove \emph{completeness}, under
conditions somewhat weaker than perfect recall.
Unfortunately, it is unclear whether their notion of a \emph{required} node
always corresponds to a \emph{requisite} node, in our terminology.

\citet{Milch2008} apply the graphical criterion for requisite observations to
multi-agent influence diagrams.
They show that any Nash equilibrium in the reduced graph where nonrequisite
information links have been removed, must also be a Nash equilibrium in the
original graph.
However, some Nash equilibrium may be lost when nonrequisite information links
get removed.
While they do not mention this, the Nash equilibria of the reduced graph 
are likely \emph{Markov perfect equilibria}, which \citet{Maskin2001}
described as Nash equilibria where strategies only rely on ``payoff-relevant
information''.
(Unfortunately, \citeauthor{Maskin2001}'s analysis did neither use nor relate
to influence diagrams.)
In multi-agent influence diagrams, \citet{Koller2003} also developed a d-separation
criteria for \emph{strategically relevant} decisions.
Roughly, a decision $D'$ is strategically relevant to $D$ if the policy
$\pi'$ used at $D'$ impacts the optimal policy at $D$.
If $\pi'$ is added as a new parent of $D'$ in the graph,
strategic relevance of $D'$ corresponds to an observation incentive for $\pi'$.

A major difference between our work and previous work on graphical criteria
is the change of focus.
Previous work has mainly focused on removing nonrequisite information links to
speed up the search for an optimal policy or a Nash equilibrium.
Here we are instead interested in what it says about the agent's incentives.
This means that we are not only interested in which of the available
observations are requisite, but also about the incentives to learn the value of
non-observation nodes,
as illustrated e.g.\ by the fairness application in \cref{sec:fairness-example}.
Works considering the value of information in influence diagrams more broadly,
rather than just for graphical criteria,
have considered the benefit of observing additional nodes, however \citep{Matheson1990,Matheson2005}.
Previous works have also mainly focused on \emph{soundness} results,
showing that the removal of nonrequisite information links will not lead to a
deterioration in decision quality (our \cref{th:soundness-sa}).
However, except for \citet{Koller2003,Nielsen1999},
previous works have not established the corresponding \emph{completeness} result:
that removing a requisite observation must lead to a strict deterioration in
decision quality (our \cref{th:completeness-sa}).

\subsection{Causality and Influence Diagrams}
While \posscite{Pearl2009} treatment of causality has by now largely become
standard, a number of related works have been done in the context of influence
diagrams.
Most prominently, \citet{Heckerman1995} criticize Pearl's treatment of
causal interventions, arguing that the meaning of a causal intervention is
sometimes unclear.
What does it mean to intervene and change someone's sex, for instance?
Instead, they suggest a decision-theoretic foundation for causality, where
explicit decision variables encode the possible interventions.
While a standard influence diagram need not always encode causal relationships
among variables, \citeauthor{Heckerman1995} introduce a criteria for when an
influence diagram is sufficiently causal to serve as a foundation for causality.
Essentially, they require that any variable that is affected by a decision must
be a descendant of the decision.
We will refer to it as the \emph{causal decision-consequences} property.
This property is automatically satisfied by our causal influence diagrams.

To answer counterfactual questions, \citet{Heckerman1995} build on work
by \citet{Howard1990} to define a \emph{canonical form} for influence diagrams.
In addition to causal decision-consequences,
canonical form requires all descendants of a decision nodes to be
deterministic functions of their parents.
This creates a clean separation between states, acts, and consequences
\citep{Savage1954}.
An influence diagram in canonical form may be seen as a decision-theoretic
version of \emph{probabilistic causal model},
which \citet{Pearl2009} uses to evaluate counterfactual queries.
Criticizing the deterministic requirement,
\citet{Dawid2002} argues that it forces the modeler to arbitrarily
specify deterministic relationships which they may know nothing about.
Worse, the deterministic relationships can affect the answer to a
counterfactual query.
Instead, Dawid argues that counterfactual queries can be more accurately
answered in an appropriately defined probabilistic model.

\subsection{Intervention Incentives}

While no graphical criteria has been developed for intervention incentives prior to
our work,
a few different works has been considering the \emph{value of control}
\citep{Matheson1990,Matheson2005,Shachter2010}, defined as
``the most a decision maker should be willing to pay a hypothetical wizard
to optimally control the distribution of an uncertain variable''
\citep{Shachter2010}.
In our terminology, control corresponds to a soft intervention
(\cref{def:soft-intervention}).
\citet{Shachter2010} relates the value of control to the \emph{value of Do},
which is the value of forcing the variable to take a particular outcome, rather
than freely changing its distribution; in other words,
the value of a hard intervention.
Since a particular outcome can be forced by choosing a degenerate distribution
with all probability mass focused on a single outcome,
the value of Do is always dominated by the value of control.
For example, the notions differ at variables which face an intervention
incentive for better information, such as Step count in
\cref{fig:intervention-example}.
Here the value of Do is always 0, but the value of control can be positive.
\citet{Lu2002} introduce the new name \emph{value of intervention} for value of
control, and argue, seemingly incorrectly, that the value of intervention is more
general than the value of control.
\citet{Shachter2010} also define the \emph{value of revelation}
as the value of \emph{conditioning} on an outcome of a variable,
rather than intervening.
They relate the value of revelation to the value of Do and the value of control.

While we could have used the term \emph{control incentive} instead of
\emph{intervention incentive} for greater consistency with previous literature,
we felt the latter term more appropriate for the following reasons.
First, the term \emph{intervention} carries a connotation of a modification
\emph{exogenous} to the model, whereas \emph{control} is a more \emph{endogenous}.
Second, we want \emph{incentives} to be predictive of agent behavior.
Therefore, an incentive to control a variable should only apply to variable that
the agent can actually influence within the model -- i.e.\ nodes downstream of a
decision node.
In contrast, for nodes that are not downstream of a decision,
it makes sense to say that the agent has an incentive to
intervene on the node, thanks to the exogenous connotation of intervention,
and to say that the agent would value controlling the node,
since \emph{value} need not be predictive of in-model behavior.

Another difference between our work and the above-mentioned ones is the type of
influence diagram used.
Our work is based on CIDs, while previous works have
instead relied on causal decision-consequences.
Since causal decision-consequences only constrain the relationships among
descendants of decision nodes, previous works have relied on introducing
explicit decision variables when considering the value of control, and requiring
the influence diagram to have causal decision-consequences also for these new
variables.
While this may have some advantages \citep{Heckerman1995}, CIDs
allow us to bypass this step and immediately ask about control
incentives for any node in the diagram.


\subsection{AI Safety}
In the AI safety literature,
works relating to what we call intervention incentives have been motivated by
worries of a powerful reinforcement learning agent tampering with
the reward signal
\citep{Everitt2016vrl,Everitt2018phd,Everitt2017rc,Everitt2019tampering,Bostrom2014},
the observation \citep{Everitt2019tampering,Ring2011},
the training of the reward function \citep{Everitt2019tampering,Armstrong2015motivated,Armstrong2020pitfalls}
the utility or reward function
\citep{Hibbard2012,Everitt2016sm,Everitt2019tampering,Orseau2011,Omohundro2008aidrives,Schmidhuber2007},
or a shut-down signal \citep{Hadfield-Menell2016osg,Wangberg2017osg,Orseau2016,Soares2015cor}.
Another example is that of QA-system incentives, discussed in \cref{sec:oracle-example}.
Often, this type of work has been relying on philosophical arguments  
or mathematical models created specifically for the purpose of
studying a particular type of intervention incentive.

A first step towards a more unified treatment of multiple reward tampering
problems was attempted by \citet{Everitt2018alignment,Everitt2018phd}.
That approach was based on causal graphs rather than CIDs,
which made it necessary to supplement the graphical perspective with formal theorems.
In contrast, as we have shown here, the CIDs contain enough
information to infer incentives directly from the graph.
We hope that this will enable a more general and systematic study of
intervention incentives.
First steps in this direction have been taken by
\citet{Everitt2019tampering,Everitt2019modeling}.


\section{Limitations and Future Work}
\label{sec:limitations}


Here follows a list of some limitations of our current work, with
pointers to directions for future work.

\begin{itemize}
\item
  Our graphical definitions can overestimate the presence
  of observation or intervention incentives,
  as not all probability distributions
  will induce an incentive just because the graph permits it.
  A similar criticism can be put forth against the d-connectivity:
  Two nodes that are d-connected are not necessarily conditionally
  dependent.
  In response to this, \citet{Meek1995} has shown that almost all probability
  distributions will induce an incentive if the graph permits it.
  \citeauthor{Meek1995}'s result could likely be adapted
  to CID diagrams and incentives.
\item
  A perfect rationality assumption is implicit throughout our work.
  This assumption is almost always unrealistic.
  Nonetheless, rational behavior constitutes an important limit
  point of increasing intelligence \citep{Legg2007def}.
  Characterizing rational behavior therefore
  gives an important clue to what the agent strives towards
  (i.e.\ what its incentives are).
\item
  The CID must be known for our methods to be applicable.
  Further work may establish more systematic modeling principles, to make the
  modeling process smoother and more reliable.
\item
  CIDs and graphical models in general are not ideal for modeling
  structural changes, such as when the structure of part of the graph is
  determined by the outcome of a previous node.
  For these cases, decision trees and game trees offer more flexible (but less
  compact) representations.
  Characterizing incentives for decision tress and game trees is a potentially
  interesting line of future work.
\item
  Incentives often depend as much on an agent's beliefs as the actual nature of
  reality.
  \emph{Networks of influence diagrams} \citep{Gal2008} extend influence diagrams
  with nodes representing the agents' beliefs.
  Extending the analysis of observation and intervention incentives in
  networked influence diagrams may prove interesting.
\item
  CIDs effectively assume that agents follow causal decision
  theory \citep{Skyrms1982,Weirich2016}, as no information flows ``backwards''
  from decision nodes.
  Similarly, the intervention incentives only makes sense for agents that reason causally
  about the world.
  Not all agents reason causally this way \citep{Everitt2015cdtedt}.
  It is possible that another theory of incentives could be developed for
  agents that reason in non-causal ways.
\item
  In this part of the paper we only considered single-decision CIDs.
  A forthcoming second part extends the criteria to multi-decision and multi-agent
  settings \citep{Everitt2019part2}.
\end{itemize}

Other natural directions for future work include exploring applications more
closely, such as those we mentioned in
\cref{sec:fairness-example,sec:oracle-example}.
Another potential starting point is the wide range of surprising agent behaviors
recorded by \citet{Lehman2018}.

\section{Conclusions}
\label{sec:conclusions}

In this paper, we have developed a general method for understanding some aspects
of agent incentives.
The theory sacrifices some details to the benefit of elegance.
Rather than using the exact probability distribution describing the
agent-environment interaction, we look solely at the structure of the
interaction, as described by a causal influence diagram \citep{Howard1984,Koller2003,Pearl2009}.
This perspective enables easy inference of (potential) incentives.
Indeed, the graphical criteria for which nodes face observation incentives and
intervention incentives are surprisingly clean and natural.
After iterative pruning of nonrequisite information links, the criteria are
essentially d-connectedness (or conditional dependence) for observation incentives,
and a directed path to a utility node for intervention incentives.

The graphical perspective also makes the modeling problem easier.
In many cases, the exact relationships between variables is unknown or
unspecified.
Meanwhile, the rough structure of the interaction is often either known or
possible to guess with some confidence
(as in the examples in \cref{sec:fairness-example,sec:oracle-example}).
When the structure of the interaction is more uncertain, the incentive analysis
is simple enough to be done repeatedly for a number of possible structures.

To illustrate how the insights gained from our theory can be used in practice,
we applied it to the well-established problems of \emph{fairness} and
\emph{QA-system incentives}
(\cref{sec:fairness-example} and \cref{sec:oracle-example}, respectively).
For fairness, we illustrated how observation incentives predict whether a piece
of information about an applicant is used to infer some sensitive attribute or not.
For QA-system incentives, the intervention incentive criterion 
(\cref{th:soft-sa}) could be used to elegantly re-establish previous findings in the
literature about which uses of QA-systems lead to bad incentives and which do
not.

Many other AI safety problems that have been discussed in the literature
are also fundamentally incentive problems.
Examples include
corrigibility, 
interruptibility, 
reward tampering, and 
utility function corruption (\cref{sec:related-work}), 
as well as
reward gaming \citep{Leike2017gw},
side effects \citep{Armstrong2017impact,Krakovna2018side},
and boxing/containment \citep{Babcock2017}.
We hope that the methods described in this paper will contribute to a more
systematic understanding of agent incentives, deepening our understanding of
many of these incentive problems and their solutions.

\printbibliography[heading=bibintoc]

\appendix
\pagebreak
\section{Representing Uncertainty}
\label{sec:unknown}

This section shows how a Markov decision process (MDP) with unknown transition
function can be modeled with an influence diagram.
By assuming that the agent can choose a policy that optimizes its value function,
we are implicitly assuming that the agent knows the probabilistic relationship
between variables.
This is less restrictive than it may seem, because unknown probabilistic
relationships can always be represented by adding an unobserved node $\Theta$.
For example, if the probabilistic relationship $P(x\mid \pa_X)$ between $X$
and its parents $\Pa_X$ is unknown, then we add $\Theta$ as an additional parent of
$X$, and let the outcome of $\Theta$ determine the relationship between $X$ and
$\Pa_X$.
By refraining from adding an information link from $\Theta$ to the agent's
decision nodes, we specify that $\Theta$ is unobserved or \emph{latent}.
For each $\theta\in\dom(\Theta)$, the influence model must specify a prior probability $P(\theta)$ 
and a concrete relationship $P(x\mid \pa_X, \theta)$.
This lets the agent do Bayesian reasoning about the possible values of $\theta$
and the possible relationships between $X$ and $\Pa_X$.

\begin{figure}
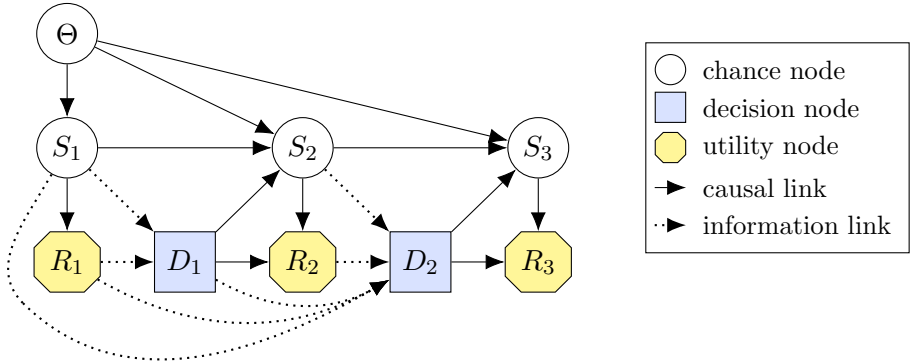

  \centering
  \begin{influence-diagram}[
    every node/.append style = {circle}
    ]
      \node (R1) [utility] {$R_1$};
      \node (S1) [above = of R1] {$S_1$};
      \node (A1) [right = of R1, decision] {$D_1$};

      \node (R2) [right = of A1, utility] {$R_2$};
      \node (S2) [above = of R2] {$S_2$};
      \node (A2) [right = of R2, decision] {$D_2$};

      \node (R3) [right = of A2, utility] {$R_3$};
      \node (S3) [above = of R3] {$S_3$};

      \node (ah) [minimum size=0mm,node distance=2mm, below left = of R1, draw=none] {};

      \path
      (S1) edge[->] (R1)
      (S1) edge[->, information] (A1)
      (R1) edge[->, information] (A1)

      (S1) edge[->] (S2)
      (A1) edge[->] (S2)
      (A1) edge[->] (R2)
      (A1) edge[->, information, bend right] (A2)
      (R1) edge[->, information, bend right] (A2)

      (S2) edge[->] (R2)
      (S2) edge[->, information] (A2)
      (R2) edge[->, information] (A2)

      (S2) edge[->] (S3)
      (A2) edge[->] (S3)
      (A2) edge[->] (R3)

      (S3) edge[->] (R3)
      ;
      \draw[information]
      (S1) edge[ in=135,out=-120] (ah.center)
      (ah.center) edge[->, out=-45,in=-150] (A2);

      \node (theta) [above = of S1] {$\Theta$};
      \path
      (theta) edge[->] (S1)
      (theta) edge[->] (S2)
      (theta) edge[->] (S3)
      ;


    \cidlegend[right = of S3]{
      \legendrow{}{chance node} \\
      \legendrow{decision}{decision node}\\
      \legendrow{utilityc, chamfered rectangle xsep=1.5pt, chamfered rectangle ysep=1.5pt}{utility node}\\
      \legendrow[causal]{draw=none}{causal link} \\
      \legendrow[information]{draw=none}{information link} \\ 
    }
    \path (causal.west) edge[->] (causal.east);
    \path (information.west) edge[->, information] (information.east);
    
    \end{influence-diagram}
    \vspace{-3mm}
    \caption{
      Representing an MDP with unknown transition probabilities with a
      CID graph.
      The nodes represent states $S_1, S_2, \dots$, decisions $D_1,D_2,\dots$,
      and rewards $R_1,R_2,\dots$.
      The unknown state transition probabilities $P(s_t\mid s_{t-1}, a_t)$ are
      modeled by adding an unobserved parameter node $\Theta$.
      To permit non-stationary, learning policies, the decision context for each
      decision contains all previously observed information.
      To model an MDP with unknown rewards assigned to each state,
      arrows from $\Theta$ to $R_1$, $R_2$, and $R_3$ would also be added.
  }
  \label{fig:mdp-uncertainty}
\end{figure}

Let us illustrate by modeling an MDP with
unknown transition probabilities, which are a standard mathematical framework
for reinforcement learning \citep{Sutton2018}.
In an MDP, an agent is taking \emph{decisions} $D_1, D_2, \ldots$ that influence
\emph{states} $S_1, S_2, \ldots$, in order to optimize \emph{rewards} $R_1, R_2,
\ldots$.
To represent that the state-transition function is initially unknown,
a node $\Theta$ has also been added to the graph; see \cref{fig:mdp-uncertainty}.

Note that the influence diagram representation differs from the commonly used
\emph{state transition diagrams} \citep[Ch.~3]{Sutton2018}
by having nodes for each time step, rather than a node for each possible state.

\section{Proofs}
\label{app:proofs}

\Cref{ch:observation-proofs} gives the proofs for the observation incentive criterion
(\cref{th:observation-sa}) and \cref{sec:intervention-proofs} gives the proofs
for the intervention incentive criterion (\cref{th:soft-sa}).

\subsection{Observation Incentive Proofs}
\label{ch:observation-proofs}

This aim of this section is to give a proof of \cref{th:observation-sa},
which identifies observation incentives in influence diagrams.
To this effect, we establish two theorems showing that:
\begin{itemize}
\item Soundness: An optimal policy need never depend on a nonrequisite
  observation (\cref{th:soundness-sa}).
  This establishes the \emph{only if} direction of \cref{th:observation-sa}.
\item Completeness:
  For any graph $G$ where $O$ is a requisite observation, there exists a distribution
  $P$ over $G$ such that every optimal policy must depend on $O$
  (\cref{th:completeness-sa}).
  This establishes the \emph{if} direction of \cref{th:observation-sa}.
\end{itemize}
The theorems and their names are closely related to the soundness and
completeness theorems for d-separation, established by
\citet{Verma1988soundness} and \citet{Geiger1990completeness},
respectively.
They are also related to the soundness and completeness theorems about strategic
relevance by \citet{Koller2003}.

We start with soundness in \cref{sec:soundness}, and continue with completeness
in \cref{sec:completeness}.

\subsection{Soundness}
\label{sec:soundness}

Soundness results similar to the one we give here has previously been
established by
\citet{Lauritzen2001,Nielsen1999}.
\Cref{fig:soundness} illustrates \cref{th:soundness-sa}.
The proof builds on the soundness result for d-separation.

\begin{figure}
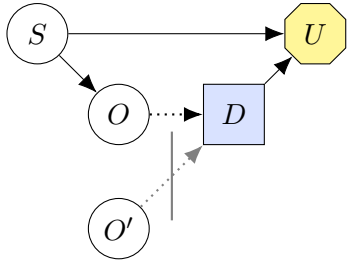

  \centering
    \begin{influence-diagram}[
    every node/.append style = {circle}
    ]
    \node (S) [] {$S$};
    \node (O) [below right = of S] {$O$};
    \node (A) [right = of O, decision] {$D$};
    \node (hA) at (A) [draw=none] {};
    \node (hY) [draw=none, above right = of hA] {};
    \node (Y) at (hY) [ utility] {$U$};
    \node (Op) [below = of O] {$O'$};

    \path
    (S) edge[->] (Y)
    (S) edge[->] (O)
    
    (O) edge[->, information] (A)
    (A) edge[->] (Y)
    ;

    \draw[->, color=gray, information] (Op) to node[solid,sloped, strike out, draw, -]{} (A);
  \end{influence-diagram}
  \caption{\Cref{th:soundness-sa} shows that a rational choice of $D$ never depends
    on nonrequisite observations such as $O'$.
    It thereby allows us to cut the information link $O'\to A$ without loss
    of quality in the choice of $D$.
  }
  \label{fig:soundness}
\end{figure}

\begin{theorem}[Single-decision observation incentive criterion; soundness direction]
  \label{th:soundness-sa}
  Let $\envgsa$ be a single-decision CID graph, and let
  $X\in\sW\setminus\desc(D)$ be a node not descending from the decision $D$.
  There exists a parameterization $P$ for $G$ in which the agent has an
  observation incentive for $X$
  only if $X$ is d-connected to a utility node that descends from $D$:
  \[X\not\dsep \sU \cap \desc(D) \mid \{D\} \cup \Pa_{D}\setminus \{X\} .\]
\end{theorem}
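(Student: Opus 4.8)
The plan is to prove the contrapositive: assuming $X\dsep\sU'\mid\{D\}\cup\sZ$ — writing $\sU':=\sU\cap\desc(D)$ and $\sZ:=\Pa_D\setminus\{X\}$ — I would show that for \emph{every} parameterization $P$ the optimal values with and without the information link $X\to D$ coincide, so there is no observation incentive for $X$. Write $G^+$ and $G^-$ for $G$ with the link $X\to D$ respectively present and absent, so the decision context is $\sZ\cup\{X\}$ in $G^+$ and $\sZ$ in $G^-$. One inequality, $V^*_{X\to D}\ge V^*_{X\not\to D}$, is immediate: any policy for $G^-$ extends to a policy for $G^+$ that ignores $X$ and induces the same joint distribution. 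So the work lies in the reverse inequality, and since expected utility is linear in the decision rule at each context it suffices to restrict attention to deterministic policies.

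The main device is a context--value function. For a deterministic policy $\delta$ in $G^+$, a truncated-factorization computation yields
\[ V^\delta \;=\; \kappa \;+\; \sum_{\sz,x} P(\sZ=\sz,\,X=x)\;\mu\big(\delta(\sz,x),\,\sz,\,x\big), \]
where $\kappa:=\sum_{U\in\sU\setminus\desc(D)}\EE[U]$ is policy-independent (the mechanism at $D$ cannot affect a non-descendant of $D$), the context marginal $P(\sZ=\sz,X=x)$ is likewise policy-independent (the context consists of non-descendants of $D$), and $\mu(d,\sz,x):=\EE[\sum_{U\in\sU'}U \mid \sZ=\sz,\,X=x,\,\mathrm{do}(D=d)]$ is the expected influenceable utility when the context is $(\sz,x)$ and $D$ is set to $d$ by intervention. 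Crucially $\mu$ depends only on $P$, not on the policy, so optimizing $V^\delta$ reduces to choosing, for each context, an action that maximizes $\mu$.

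The heart of the argument is to show that $\mu(d,\sz,x)$ does not depend on $x$. Under $\mathrm{do}(D=d)$ the joint distribution factorizes over the mutilated graph $G_{\bar D}$ in which all arrows into $D$ are deleted and $D$ is a constant. Deleting arrows into $D$ can only shrink descendant sets, hence can only deactivate colliders, so every d-separation of $G$ persists in $G_{\bar D}$; in particular $X\dsep\sU'\mid\sZ\cup\{D\}$ holds in $G_{\bar D}$. Soundness of d-separation \citep{Verma1988soundness} then gives $X\perp\sU'\mid\sZ\cup\{D\}$ in $P(\cdot\mid\mathrm{do}(D=d))$, and since $D$ is almost surely constant there the conditioning on $D$ is vacuous, so $X\perp\sU'\mid\sZ$. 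This is exactly the claim; write the common value as $\mu(d,\sz)$. Taking an optimal deterministic $\delta^+$ for $G^+$ I then bound
\[ V^*_{X\to D} \;=\; \kappa + \sum_{\sz,x}P(\sZ=\sz,X=x)\,\mu(\delta^+(\sz,x),\sz) \;\le\; \kappa + \sum_{\sz}P(\sZ=\sz)\,\max_d\mu(d,\sz), \]
and the policy $\delta^-(\sz)\in\arg\max_d\mu(d,\sz)$ — legal for $G^-$ because it ignores $X$, and whose value is computed by the same $\mu$ since the $\mathrm{do}$-distribution is blind to whether $X$ is a parent of $D$ — attains this upper bound, giving $V^*_{X\not\to D}\ge V^*_{X\to D}$. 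Together with the trivial inequality this yields equality.

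The hard part will be the mutilation/soundness step together with the bookkeeping around it: making precise that $\mu$ is governed by an intervention on $D$ rather than a naive conditioning, verifying that d-separation genuinely transfers to $G_{\bar D}$, and checking that the hypothesis as stated — with conditioning set $\{D\}\cup\sZ$ — is exactly what is required once the mutilation turns $D$ into a root. The value-decomposition identity itself is a routine computation, once the utility nodes are split into those that are and those that are not descendants of $D$.
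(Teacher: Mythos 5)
Your proof is correct and follows essentially the same route as the paper's: prove the contrapositive by using soundness of d-separation to show that the expected utility conditional on the decision context and the decision does not vary with $x$, so for each reduced context some action is simultaneously optimal for all $x$ and an optimal policy ignoring $X$ exists. Your rendition is in fact somewhat more careful than the paper's --- splitting the non-descendant utility nodes into a policy-independent constant $\kappa$ (the paper's displayed equality implicitly treats $X$ as d-separated from all of $\sU$ rather than only $\sU\cap\desc(D)$) and routing the key independence through the mutilated graph $G_{\bar D}$ rather than naive conditioning on $D$ --- but these are refinements of the same argument, not a different one.
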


\begin{proof}
  Assume $X\in\Pa_D$ and let $\Pa'_D = \Pa_D\setminus \{X\}$.
  By assumption, $X$ is d-separated from $\sU$ by $\{D\} \cup \Pa'_D$.
  Therefore, for any parameterization $P$ and
  any possible decision context $\pa'_D\in\dom(\Pa'_D)$
  and choice $d\in \dom(D)$, the expected utility is independent of
  $X$ by the soundness of d-separation \citep{Verma1988soundness}.
  That is, for any $x,x'\in \dom(X)$:
  \[
    \EE\left[\sum_{U\in\sU} U\mmid d, \pa'_D, x\right]
    = \EE\left[\sum_{U\in\sU} U\mmid d, \pa'_D, x'\right].
  \]
  Consequently, either $d$ is optimal for all $x\in\dom(X)$ or none, in the
  decision context $\pa'_D$.
  Since $\dom(d)$ is finite, some $d^*_{{\pa'_D}}\in\dom(D)$ must be optimal for $\pa'_D$
  and any $x\in\dom(X)$.

  By repeating this argument for each decision context $\pa'_D\in\dom(\Pa'_D)$,
  we obtain a policy $\pi^*(d\mid \pa_D)$ that deterministically maps
  $\pa'_D\mapsto d^*_{{\pa'_D}}$.
  This policy $\pi^*$ is optimal and never depends on $X$.
  The case when $X\not\in\Pa_D$ can be proven similarly.
\end{proof}

\subsubsection{Completeness}
\label{sec:completeness}

What enabled the short soundness proof the heavy lifting
performed by the soundness result for d-separation,
which shows that any d-separated variables must be conditionally independent
\citep{Verma1988soundness}.
It would have been nice if we could similarly base our completeness result
on the completeness result for d-separation, which shows that whenever
two variables are d-connected, then there exists a parameterization
under which they are conditionally dependent.
Unfortunately, we need slightly more than conditional dependence: we need
different conditional expected utility.
While minor, the difference mean that we cannot directly build on
d-separation completeness.
Rather than explaining exactly what in the d-separation completeness proof would
need to be changed in order to accommodate our result,
we give an explicit construction, shown in \cref{fig:completeness-construction}.

\begin{figure}
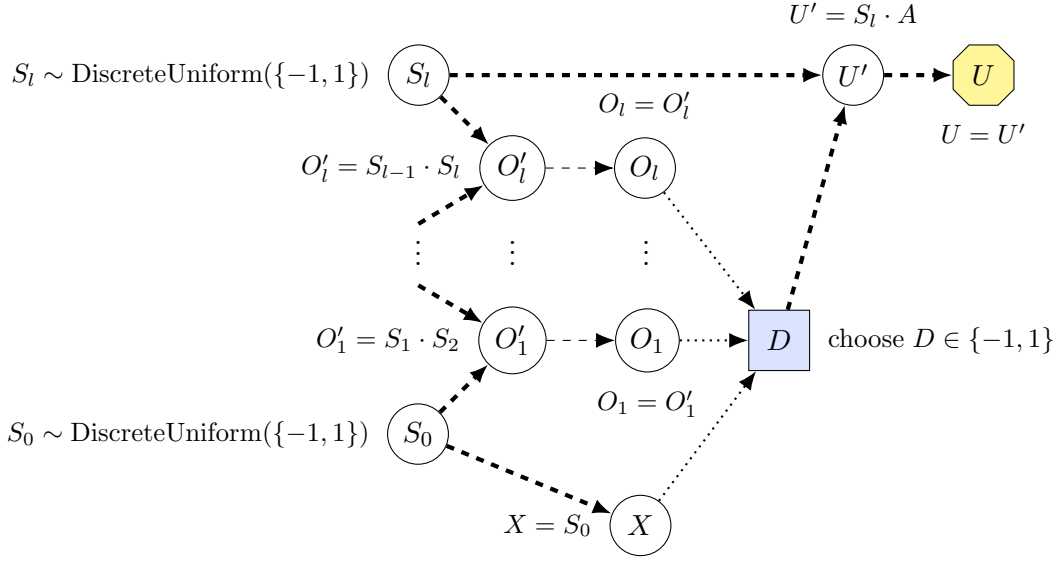

  \centering
    \begin{influence-diagram}[
      every node/.append style = {circle},
      node distance=9mm
    ]
    \node (Sl) [] {$S_l$};
    \node (Olp) [below right = of Sl] {$O'_{l}$};
    \node (Ol) [right = of Olp] {$O_{l}$};
    \node (O1p) [below = of Olp, yshift=-5mm] {$O'_{1}$};
    \node (O1) [right = of O1p] {$O_{1}$};
    \node (S0) [below left = of O1p] {$S_0$};
    \node (Oh) [below right = of S0, draw=none] {};
    \node (O) [right = of Oh] {$X$};

    \node (sd) at ($(Sl)!0.5!(S0)$) [draw=none] {$\rvdots$};
    \node at ($(Ol)!0.5!(O1)$) [draw=none] {$\rvdots$};
    \node at ($(Olp)!0.5!(O1p)$) [draw=none] {$\rvdots$};

    \node (A) [decision, right = of O1] {$D$};
    \node (Up) [right = of Sl, xshift=4cm] {$U'$};
    \node (U) [right = of Up, utility] {$U$};

    \path[dashed, ultra thick]
    (Sl) edge[->] (Up)
    (Sl) edge[->] (Olp)
    (S0) edge[->] (O1p)
    (S0) edge[->] (O)
    (A) edge[->] (Up)
    (Up) edge[->] (U)
    (sd.south) edge[->] (O1p)
    (sd.north) edge[->] (Olp)
    ;

    \path[dashed]
    (Olp) edge[->] (Ol)
    (O1p) edge[->] (O1)
    ;

    \path
    (Ol) edge[->, information] (A)
    (O1) edge[->, information] (A)
    (O) edge[->, information] (A)
    ;

    \begin{scope}[
      node distance = 1mm,
      every node/.style = {rectangle, draw=none},
      ]
    \node [above = of Up] {\small $U'=S_l\cdot A$};
    \node [below = of O1]  {\small $O_1=O'_1$};
    \node [above = of Ol] {\small $O_l=O'_l$};
    \node [below = of U] {\small $U=U'$};
    \node [left =of O] {\small $X = S_0$};
    \node [left = of Olp] {\small $O'_l = S_{l-1}\cdot S_{l}$};
    \node [left = of Sl] {\small $S_l\sim \textrm{DiscreteUniform}(\{-1,1\})$};
    \node [left = of S0] {\small $S_0\sim \textrm{DiscreteUniform}(\{-1,1\})$};
    \node [left = of O1p] {\small $O'_1 = S_1\cdot S_2$};
    \node [right = of A] {\small choose $D\in \{-1,1\}$};
    \end{scope}
    
  \end{influence-diagram}
  \caption{
    The completeness construction described in \cref{def:completeness-construction}. 
    Dashed arrows represent directed paths of nodes.
    The thick path shows the supporting paths (\cref{def:supporting-paths}).
    Only by observing $X$ is it possible to distinguish an assignment $\ss$ from an
    assignment $-\ss$ to the nodes $\sS = \{S_0, \dots S_l\}$.
  }
  \label{fig:completeness-sa}
  \label{fig:completeness-construction}
\end{figure}

The following definition
defines backdoor and frontdoor supporting paths,
which are the d-connecting paths between an decision and a utility variable,
and an observation and a utility variable.
These paths contain variables relevant to our
completeness theorem.
The paths are shown in \cref{fig:completeness-construction}.

\begin{definition}[Supporting paths]
  \label{def:supporting-paths}
  Assume that $X\in\Pa^*_D$ is a requisite observation to $D$ in a
  single-decision CID graph $\envgsa$.
  We will refer to
  \begin{itemize}
  \item
    A \emph{frontdoor supporting path of $D$ and $X$} is
    a directed path $D\pathto U\in\sU$, and
  \item
    A \emph{backdoor supporting path of $D$ and $X$} is an
    undirected path $X \upathto U'\in\sU$ not passing $D$ that is active
    when conditioning on $\{D\}\cup \Pa_D\setminus \{X\}$.
  \end{itemize}
  A pair of a backdoor supporting path and a frontdoor supporting path for $D$ and
  $X$ where both paths end in the same $U\in\sU$ is called a
  \emph{supporting pair of paths for $D$ and $X$}; see
  \cref{fig:completeness-sa}.
  
  There must be at least one supporting pair of paths for each requisite
  observation $X$.
  This follows, because by \cref{def:requisite-observation}
  a requisite observation $X$ must satisfy the criterion in
  \cref{th:observation-sa}.
  This requires there to be a utility node $U$ such that
  $U$ descends from $D$ (the frontdoor path)
  and
  $X$ is d-connected to $U$ when conditioning on $\Pa_D\cup\{D\}$ (the backdoor path).
\end{definition}

\begin{definition}[Completeness construction]
  \label{def:completeness-construction}
  As illustrated in \cref{fig:completeness-sa},
  for any pair of supporting paths for $D$ and a requisite observation
  $X\in\Pa^*_D$,
  the frontdoor supporting path always has the simple form
  \[
    D \pathto U' \pathto U
  \]
  and the backdoor supporting path always has the form
  \begin{equation}
    \label{eq:backdoor}
    X 
    \pathfrom S_1 \pathto O'_1 \pathfrom \cdots \pathto O'_l
    \pathfrom S_l\pathto U' \pathto U.
  \end{equation}
  Here $U'$ is the node where the path merges with the frontdoor supporting path $D\to U$.
  The nodes $X, O_1\dots,O_{l}$ are all in $\Pa_D$, and no other nodes on the
  path are in $\Pa_D$.
  There may be repetition among the nodes $O_i$, so that some $O_i$ is the
  descendant of both $O'_i$ and $O'_j$, for some $j\not=i$.
  In this case, we let the domain of $O_i$ be vector-valued, with one of the
  components copying $O_i$ and the other copying $O_j$.
  The following special cases are covered under the general form of
  \cref{eq:backdoor}
  for the backdoor supporting path:
  \begin{itemize}
  \item $X=S_0$ means that the path starts forward from $X$.
  \item $X=S_0$ and $l=0$ means that the path is \emph{directed} $X\to U$.
  \item $U=U'$ means that the paths from $D$ and from $X$ only merge at $U$.
  \end{itemize}

  Choose $P$ per the following. All nodes have domain $\{-1, 1\}$, and:
  \begin{itemize}
  \item
    $S_1, \dots, S_l$ are
    sampled randomly and independently from $\{-1, 1\}$.
  \item
    Any collider node $O'_i\in \{O'_1,\dots, O'_l\}$
    is the product of its two neighbors on the path.
  \item
    $U'$ is the product of its predecessor on the path from $D$ and its
    predecessor on the path from $X$.
  \item
    All other nodes on the frontdoor path and the backdoor path
    copy the value of their causal predecessor on the path,
    and so do the nodes on the paths $O'_i\pathto O_i$.
  \end{itemize}
\end{definition}

Using this construction, we can now prove the \emph{if} direction of
\cref{th:observation-sa}.

\begin{theorem}[Single-decision observation incentive criterion; completeness direction]
  \label{th:completeness-sa}
  Let $\envgsa$ be a single-decision CID graph, and let
  $X\in\sW\setminus\desc(D)$ be a node not descending from the decision $D$.
  There exists a parameterization $P$ for $G$ in which the agent has an
  observation incentive for $X$
  if $X$ is d-connected to a utility node that descends from $D$:
  \[X\not\dsep \sU \cap \desc(D) \mid \{D\} \cup \Pa_{D}\setminus \{X\} .\]
\end{theorem}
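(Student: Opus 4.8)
The plan is to first reduce to the case $X\in\Pa_D$ and then run the explicit construction of \cref{def:completeness-construction}. If $X\notin\Pa_D$, I would pass to the graph $G^{+X}$ obtained from $G$ by adding the information link $X\to D$, and observe that $X\not\dsep \sU\cap\desc(D)\mid \{D\}\cup\Pa_D$ holds in $G$ iff it holds in $G^{+X}$: adding edges cannot destroy active paths, and conversely any active path in $G^{+X}$ that uses the new edge must start either $X\to D\to\cdots$ (a chain blocked at the conditioned node $D$) or $X\to D\gets W$ with $W\in\Pa_D$ (and then the path is blocked immediately after the conditioned node $W$, since a fork or chain through $W$ is inactive). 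Hence the d-connection condition is insensitive to adding $X\to D$, so it is enough to prove: whenever $X\in\Pa_D$ is a \emph{requisite} observation (\cref{def:requisite-observation}), some parameterization $P$ gives $V^*_{X\to D}>V^*_{X\not\to D}$ — noting that deleting only the link $X\to D$ leaves the remaining observations $\Pa_D\setminus\{X\}$ untouched.

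Next I would fix a supporting pair of paths for $D$ and $X$ (which exists because $X$ is requisite, \cref{def:supporting-paths}) and parameterize $G$ exactly as in \cref{def:completeness-construction}: all nodes binary, the frontdoor path copying $D$ forward to the merge node $U'$ and then to $U$; each fork $S_i$ an independent fair coin in $\{-1,1\}$; each collider $O'_i$ the product of its two on-path neighbours, so $O'_i=S_{i-1}S_i$ with $S_0:=X$; $U'=D\cdot S_l$ and $U$ a copy of $U'$. Every node lying off the two paths is set to a constant, every utility node other than $U$ is the constant $0$, and wherever an on-path node has parents outside the paths its conditional distribution simply ignores them. This is a legitimate CID parameterization regardless of the rest of $G$.

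Then I would compute the two optimal values. With $X\to D$ present, the decision context reveals $X$ together with $O'_1,\dots,O'_l$ (since $O_i$ copies $O'_i$); because $S_i^2=1$, telescoping gives $\prod_{i=1}^l O'_i=S_0S_l=X\,S_l$, so the agent recovers $S_l=X\prod_i O'_i$, plays $D=S_l$, and collects $U=D\cdot S_l=1$; since the other utilities are constant, $V^*_{X\to D}=1$ up to that constant. Without $X\to D$, the agent sees only $O'_1,\dots,O'_l$ (and constant parents); the map $(S_0,\dots,S_l)\mapsto(S_0S_1,\dots,S_{l-1}S_l)$ is two-to-one, identifying $\ss$ with $-\ss$, so conditioning on the observed values leaves $S_0=X$ uniform and independent, whence $S_l=S_0\prod_i O'_i$ is conditionally uniform on $\{-1,1\}$ with zero conditional mean and every (possibly randomized) policy yields expected utility $0$, i.e.\ $V^*_{X\not\to D}=0$ up to the same constant. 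The degenerate cases ($l=0$, $X=S_0$, $U=U'$, repeated $O_i$) are handled exactly as stipulated in \cref{def:completeness-construction} and give the same conclusion, so $V^*_{X\to D}>V^*_{X\not\to D}$, establishing the \emph{if} direction.

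The hard part is not this bookkeeping but justifying that the supporting paths really take the simple form the construction presupposes: that the backdoor path meets the frontdoor path only along a common terminal directed segment $U'\pathto U$, that its colliders are precisely the nodes having observed descendants in $\Pa_D$ while its non-colliders avoid $\Pa_D$, and that no further ancestor of an on-path node can reintroduce a dependence exploitable without observing $X$. Pinning these structural facts down is exactly where the active-path requirement of \cref{def:supporting-paths} and the soundness/completeness of d-separation do the real work, and where one must be careful to keep the global parameterization well-defined.
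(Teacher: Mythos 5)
Your proposal is correct and takes essentially the same route as the paper: it reduces to the case $X\in\Pa_D$ (you justify the added link $X\to D$ more explicitly than the paper, which only remarks that the adaptation is easy), then runs exactly the construction of \cref{def:completeness-construction} with the policy $D=X\prod_i O_i$ achieving value $1$ and the two-to-one map $\ss\mapsto(S_0S_1,\dots,S_{l-1}S_l)$ forcing value $0$ without $X$. The structural worry you flag at the end --- that the supporting pair really has the alternating fork/collider form with colliders witnessed by observed descendants and a common terminal segment into $U$ --- is likewise left at the level of assertion in the paper's \cref{def:supporting-paths,def:completeness-construction}, so your treatment matches the paper's in both substance and rigor.
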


\begin{proof}
  Let $\Pa_D^+ = \Pa_D\cup\{X\}$ and $\Pa_D^- = \Pa_D\setminus\{X\}$.
  Then the agent has an observation incentive
  for $X$ in a parameterization $P$ if there is a policy $\pi^+(d\mid \pa_D^+)$
  whose decision depends on $X$
  such that for every policy $\pi^-(d\mid \pa_D^-)$ whose decision
  does not depend on $X$,
  it holds that $V^{\pi^+} > V^{\pi^-}$.
  
  For simplicity, we will assume that $X\in \Pa_D$,
  which means that $\Pa^+_D=\Pa_D$, and $\Pa^-_D=\Pa_D\setminus\{X\}$.
  The argument is easily adapted to the case when $X$ is not in $\Pa_D$, by
  considering a graph with an extra information link $X\to D$.
  
  We will establish the theorem this by showing that if $X$
  is d-connected to a utility node in the sense of
  \[X\not\dsep \sU \cap \desc(D) \mid \{D\}\cup \Pa_D\setminus \{X\},\]
  then there exists a distribution $P$ such that there exists a
  policy $\pi^+(d\mid \pa^+_D)$ with
  \[
    P(U = 1\mid \pi^+) = 1
  \]
  while any policy $\pi^-(d\mid \pa^-_D)$ that does not
  depend on $X$ has
  \[
    P(U = 1\mid \pi^-) = P(U = -1 \mid \pi^-) = 1/2.
  \]
  $P$ may further be chosen so $\dom(U) = \{-1,1\}$, and $\dom(U') = \{0\}$ for
  all other $U'\in \sU\setminus\{U\}$.
  As a consequence we get $V^{\pi^+} = 1$ and $V^{\pi^-} = 0$.

  The proof relies on the following three observations about
  the completeness construction described in
  \cref{def:completeness-construction}:
  
  (i) The construction ensures that $U = D\cdot S_l$ with probability 1
  \begin{equation}
    \label{eq:u-a-sl}
    P(u\mid d, s_l) = \delta^u_{d\cdot s_l}.
  \end{equation}
  since the outcome of $S_l$ is just copied forward until $U'$,
  where it is multiplied with the choice of $D$ having been copied forward in the
  same way.
  The outcome of $U'$ is then copied forward to $U$.
  
  (ii)
  Every time the sign switches in the sequence $\sS = \{S_0,\dots, S_l\}$,
  exactly one node $O_i$ becomes negative.
  (The node $O_i$ that sits between the sign switch on the path, to be precise.)
  Therefore $\prod_{i=1}^l o_i$ is positive if and only if $s_0 = s_l$, i.e.
  \begin{equation}
    \label{eq:sl-s0}
    P\left(s_l = s_0\prod_{i=1}^l o_i\right) = 1.
  \end{equation}

  (iii)
  Finally, $P(O=S_0) = 1$, since the outcome of $S_0$ is just copied forward to
  $X$.

  Combining (ii) and (iii) gives that the policy
  $\pi^+(X, \sO) = X\prod_{i=1}^l O_i$ will always make
  $D$ match $S_l$, where $\sO=\{O_1,\dots, O_l\}$.
  This in turn gives:
  \begin{align*}
    P(U=1\mid \pi^+)
    &=
      \sum_{d, \so, o, s_l}
      P(U=1, a, \so, o, s_l\mid \pi^+)
      && \text{demarginalize}
    \\
    &=
      \sum_{d, \so, o, s_l}
      P(U=1\mid d, s_l)
      \pi^+(d\mid \so, o)
      P(\so, o \mid s_l)
      P(s_l)
      && \text{by d-separations}
    \\
    &=
      \sum_{d, \so, o, s_l}
      \delta^u_{a,s_l}
      \pi^+(d\mid \so, o)
      P(\so, o \mid s_l)
      P(s_l)
      && \text{by \cref{eq:u-a-sl}}
    \\
    &=
      \sum_{s_l}
      \delta^u_{s_ls_l}
      P(s_l)
      && \text{by $\pi^+$ and (ii) and (iii)}
    \\
    &=
      1/2 + 1/2 = 1
      && \text{since $(s_l)^2=1$.}
  \end{align*}
  This completes the first part of the proof.
    
  Similarly, we can also show that $P(u=-1) = P(u=1) = 1/2$ for any policy $\pi^-$ that does
  not depend on $X$.
  The key is that observing $\sO = \{O_1, \dots, O_l\}$ but not $X$
  only reveals places of sign switches in $\sS$,
  but does not distinguish between $\ss$ and $-\ss$.
  Therefore for any given $\so$, both $s_l$ and $-s_l$ are equally likely,
  \begin{equation}
    \label{eq:sl-so}
    P(S_l = 1\mid \so) = P(S_l = -1\mid \so) = 1/2,
  \end{equation}
  and therefore all decisions $d\in\dom(D)$ have the same probability for $U$, when conditioning
  only on $\sO$,
  \begin{align*}
  P(U=1 \mid \so, d)
    &= \sum_{s_l} P(U=1, s_l \mid \so, d)
    && \text{demarginalize}
    \\
    &= \sum_{s_l}P(U=1\mid s_l, d)P(s_l\mid \so)
    && \text{by d-separations}
    \\
    &= \sum_{s_l}\delta^1_{as_l}P(s_l\mid \so)
    && \text{by \cref{eq:u-a-sl}}
    \\
    &= 1\cdot 1/2 + 0\cdot 1/2 = 1/2
      && \text{by \cref{eq:sl-so}.}
  \end{align*}
  The same calculation can be made for $P(U=-1\mid \so, d)$.
  Since all decisions conditioned only on $\so$ induce the same $U$ distribution,
  all policies $\pi^-$ where the decision only depends on $\so$ also induce the
  same $U$ distribution.
  This completes the second part of the proof.
\end{proof}


\subsubsection{Intervention Incentives}
\label{sec:intervention-proofs}

\begin{proof}[Proof of \cref{th:soft-sa}]
  \emph{Only if}:
  If there is no directed path $X\pathto \sU$ in $G$, then no control on
  $X$ can affect $\sU$ for any parameterization $P$.
  Similarly, if there is a directed path in $G$ but no directed path $X\pathto
  \sU$ in the reduced graph $G^*$, then this means that $X$ only affects some
  nonrequisite observations $O\in \Pa_D\setminus\Pa^*_D$.
  By \cref{th:soundness-sa}, nonrequisite observations can never affect
  the optimal decision $D$, so therefore an intervention on $X$ cannot affect the
  agent's expected utility.

  \emph{If.}
  Assume there is a path $X\pathto U\in\sU$ and $X\not\in\{D\}$.
  Then either of the following cases ensues:
  \begin{enumerate}
  \item There is no decision on the path $X\pathto U$:\\
    Let the domain be $\{0, 1\}$ for each random variable in $\sW$,
    let $P(X=0) = P(X=1) = 1/2$ and
    let $P$ to copy the value of $X$ all the way forward to $U$.
  \item The decision $D$ is on the path $X\pathto U$:\\
    Since $X\not\in\{D\}$, this means that $X$ is either a requisite observation
    $X\in\Pa^*_D$ or $X$ is an ancestor of a requisite observation $O\in\Pa^*_D$.
    Let us consider these subcases in turn:
    \begin{enumerate}
    \item $X\in\Pa^*_D$: Use the completeness construction from
      \cref{def:completeness-construction}, with the modification that $X=0$,
      unless an intervention $c^X$ is made ``restoring'' the informativeness of
      $X$ about $S_0$.
      By the same argument as in \cref{th:completeness-sa}, the intervention $c^X$
      will strictly increase the expected utility of the agent.
    \item
      $X$ is an ancestor of $O\in\Pa^*_D$:
      Again, we use a modification of the completeness construction from
      \cref{def:completeness-construction}.
      Let $X=0$ and $O=X\cdot S_0$.
      Then $O$ will be uninformative of $S_0$, unless an intervention $c^X$ is
      made that sets $X=1$.
      Again, by the same argument as in \cref{th:completeness-sa}, the intervention $c^X$
      will strictly increase the expected utility of the agent.
    \end{enumerate}
  \end{enumerate}
  This completes the proof.
\end{proof}

\end{document}